\pdfoutput=1
\documentclass[11pt]{article}

\usepackage[T1]{fontenc}
\usepackage{lmodern}
\usepackage[margin=1in]{geometry}
\usepackage{amsmath,amssymb,amsthm}
\usepackage{mathtools}
\usepackage{enumitem}
\usepackage{graphicx}
\usepackage{float}
\usepackage{algorithm}
\usepackage{algpseudocode}
\usepackage{microtype}
\usepackage[title]{appendix}
\usepackage{xcolor}
\usepackage[colorlinks=true,linkcolor=blue!50!black,citecolor=blue!50!black,urlcolor=blue!50!black]{hyperref}

\theoremstyle{plain}
\newtheorem{theorem}{Theorem}[section]
\newtheorem{proposition}[theorem]{Proposition}
\newtheorem{lemma}[theorem]{Lemma}
\newtheorem{corollary}[theorem]{Corollary}

\theoremstyle{definition}

\newtheorem{assumption}[theorem]{Assumption}

\theoremstyle{remark}
\newtheorem{remark}[theorem]{Remark}

\newcommand{\R}{\mathbb R}
\newcommand{\Pcal}{\mathcal P_2}
\newcommand{\Fcal}{\mathcal F}
\newcommand{\Vcal}{\mathcal V}
\newcommand{\Wcal}{\mathcal W}
\newcommand{\gradW}{\nabla_W}

\newcommand{\Bop}{\mathsf B}
\DeclareMathOperator{\id}{Id}
\newcommand{\normrho}[2]{\|#1\|_{#2}}
\newcommand{\iprho}[3]{\langle #1,#2\rangle_{#3}}

\usepackage{authblk}

\begin{document}
\raggedbottom

\title{Trust-Region Optimization for Smooth Potential-Interaction Energies in Wasserstein Space}

\author[a,b,f]{You Wan\thanks{
  \href{mailto:you.wan@smail.nju.edu.cn}{you.wan@smail.nju.edu.cn}
}}

\author[c,d,e]{Ting Gao\thanks{
  Corresponding author:
  \href{mailto:gao0716@hust.edu.cn}{gao0716@hust.edu.cn}
}}

\author[b,f]{Jinqiao Duan\thanks{
  \href{mailto:duan@gbu.edu.cn}{duan@gbu.edu.cn}
}}

\affil[a]{School of Mathematics, Nanjing University, Nanjing, 210093, China}
\affil[b]{School of Sciences, Great Bay University, Dongguan 523000, China}
\affil[c]{School of Mathematics and Statistics, Huazhong University of Science and Technology, Wuhan 430074, China}
\affil[d]{Center for Mathematical Science, Huazhong University of Science and Technology, Wuhan 430074, China}
\affil[e]{Steklov-Wuhan Institute for Mathematical Exploration, Huazhong University of Science and Technology, Wuhan 430074, China}
\affil[f]{Guangdong Provincial Key Laboratory of Mathematical and Neural Dynamical Systems, Great Bay University, Dongguan 523000, China}

\date{}
\maketitle

\hypersetup{
  pdftitle={Trust-Region Optimization for Smooth Potential-Interaction Energies in Wasserstein Space},
  pdfauthor={You Wan, Ting Gao, Jinqiao Duan},
  pdfsubject={Wasserstein trust-region methods},
  pdfkeywords={Wasserstein space, trust-region methods, optimization over measures, global convergence, iteration complexity, empirical measures}
}

\begin{abstract}
  Finding low-energy configurations of interacting particles
and approximating probability distributions lead to the
minimization of potential--interaction energies in
Wasserstein space. These energies can be nonconvex, making
it important to exploit second-order information while
controlling the reliability of local approximations. We study trust-region optimization of smooth potential--interaction energies on the Wasserstein space of probability measures with finite second moment. The method uses a quadratic model along pushforward curves, an $L^2(\rho)$ step radius, and a Steihaug--Toint subsolver with an explicit self-adjoint second-variation operator. A ratio test determines acceptance and guides the radius update. Under a lower energy bound and globally bounded Hessians of the potential and interaction kernel, we prove that the objective is nonincreasing, the Wasserstein-gradient norms converge to zero, and an $\varepsilon$-stationary iterate is reached within $O(\varepsilon^{-2})$ total outer trials, including rejected trials. If the potential is quadratically coercive, every weak accumulation point is stationary. The analysis applies to arbitrary initial measures with finite second moment. For empirical measures, the iteration is a finite-dimensional trust-region method in the $L^2(\rho_N)$ inner product, with complexity constants independent of particle number and dimension when the initial objective gaps are uniformly bounded. Numerical experiments include a smooth soft-particle energy, maximum-mean-discrepancy minimization for non-Gaussian targets, component ablations, and scaling studies in particle number and dimension.
\end{abstract}

\begin{quote}
  \small
  \noindent
  \textbf{2020 Mathematics Subject Classification.} 49Q22, 65K10, 60J60, 90C30

  \smallskip
  \noindent
  \textbf{Keywords.} Wasserstein space, trust-region methods, optimization over measures, global convergence, iteration complexity, empirical measures
\end{quote}

\section{Introduction}

Optimization over probability measures provides a common
framework for interacting-particle models and distributional
approximation. We consider smooth potential--interaction energies
of the form
\[
  \Fcal(\rho)
  =
  \int_{\mathbb R^d} V(x)\,\rho(dx)
  +
  \frac12
  \int_{\mathbb R^d\times\mathbb R^d}
  W(x-y)\,\rho(dx)\rho(dy),
  \qquad
  \rho\in\mathcal P_2(\mathbb R^d),
\]
where $\mathcal P_2(\mathbb R^d)$ denotes the probability measures
with finite second moment, $V$ is an external potential, and $W$
is an even interaction kernel. These energies need not be
displacement convex, and their optimization calls for methods
that can use second-order information while controlling the
reliability of the local model. This paper develops and analyzes
a trust-region method directly in distribution space.

Molecular and soft-matter models provide one motivation.
Coarse graining replaces macromolecules by soft particles
with effective pair interactions. The Gaussian-core model is a
standard coarse-grained model with a bounded, penetrable Gaussian
repulsion \cite{LouisBolhuisHansen2000,LouisBolhuisHansenMeijer2000}.
For flexible fourth-generation dendrimers, Gaussian effective
interactions have been derived from monomer-density and
small-angle neutron-scattering data \cite{Likos2002}.
Near its equilibrium position, an optical trap acts as a linear
spring, so its potential is locally harmonic \cite{NeumanBlock2004};
harmonic trapping has also been combined with ultrasoft Gaussian
interactions in a one-dimensional colloidal model
\cite{GernertKlapp2015}.
These sources motivate the model used in
Section~\ref{sec:numerics}.

A second motivation is the approximation of a target probability
measure by minimizing a kernel discrepancy. For a smooth
translation-invariant positive-definite kernel, half of the squared
maximum mean discrepancy (MMD) between a candidate measure and
a fixed target differs from a potential--interaction energy only
by a target-dependent constant \cite{Arbel2019}. The target
determines the external potential, while the kernel determines
the interaction term.

There have been several approaches exploiting the geometry of Wasserstein space
for optimization. The Jordan--Kinderlehrer--Otto (JKO) scheme
\cite{JKO1998} gives a proximal formulation of gradient flows,
while Wasserstein proximal-gradient algorithms combine explicit
and implicit schemes \cite{SalimKorbaLuise2020,DiMarino2025}.
Bonet et al.~\cite{Bonet2024} develop mirror and preconditioned
gradient methods and derive second-variation formulas along
pushforward curves for potential and interaction energies.
Second-order approaches include Stein variational Newton and
information Newton methods, which use damping or regularization
\cite{Detommaso2018,WangLi2020}.
Perturbed-gradient and saddle-free Newton constructions address
saddle escape and local convergence under different assumptions
\cite{Yamamoto2025,LascuSuzuki2026}.
Gaussian variational methods exploit the Bures--Wasserstein
geometry of covariance matrices \cite{Lambert2022,Diao2023}.
Optimal-transport discrepancies have also been used as
policy-update constraints in reinforcement learning \cite{Terpin2022}.

Trust-region methods provide a classical mechanism for
balancing the benefit of exploiting local second-order information against the need to preserve model accuracy in Euclidean and Hilbert
spaces \cite{Toint1988,NocedalWright2006,ConnGouldToint2000},
with extensions to Riemannian manifolds \cite{Absil2007}
and to metric spaces through model oracles
\cite{Manns2026}.
In the present setting, a trial displacement $s$ generates
the candidate measure $(\id+s)_\#\rho$, and the trust-region
constraint bounds $\|s\|_{L^2(\rho)}$.
This displacement norm controls the Wasserstein distance to
the candidate measure.
A ratio of actual to predicted decrease determines acceptance
and updates the radius.

The central analytical issue is to obtain estimates that remain
valid as the underlying measure changes.
The quadratic model acts on $L^2(\rho;\mathbb R^d)$, so successive
iterates generally have different ambient Hilbert spaces.
Moreover, the analysis must accommodate atomic measures if it
is to apply directly to particle implementations.
Our contribution builds on the known second-variation structure
and classical trust-region arguments: we establish uniform
model estimates over the admissible measures, compare gradient norms at successive iterates, and bound the number of outer trials, accepted and rejected.
Because these estimates hold for every measure with finite second
moment, they apply unchanged to empirical measures.

Throughout the analysis, $V$ and $W$ are twice continuously
differentiable and bounded below, $W$ is even, and their Hessians
are globally bounded. Under these hypotheses, the main contributions are
as follows.

\begin{itemize}[leftmargin=1.6em]
  \item  \textbf{A trust-region framework in Wasserstein space.}
  The method minimizes smooth, possibly nonconvex potential--interaction energies with quadratic models along pushforward curves and an $L^2(\rho)$ trust region. It applies to every measure with finite second moment, including empirical measures.

  \item  \textbf{Measure-level convergence and complexity guarantees.}
  Under the stated assumptions, we establish convergence
  of the Wasserstein-gradient norms to zero and an
  $O(\varepsilon^{-2})$ complexity bound that counts
  both accepted and rejected trials.

  \item  \textbf{A particle implementation with uniform guarantees.}
  We obtain outer-trial complexity constants independent
  of particle number under uniform initial-gap bounds
  and fixed algorithmic parameters, and evaluate the method
  on soft-particle energies and distributional
  approximation problems.
\end{itemize}

The guarantees concern first-order stationarity for the stated
smooth energy class. They do not imply convergence to a local
or global minimizer, or convergence of the full measure sequence
to a single limit. The analysis assumes exact objective,
gradient, and model-operator evaluations.

The remainder of the paper is organized as follows.
Section~\ref{sec:method} introduces the geometric setting,
derives the quadratic pushforward model, and presents the
algorithm.
Section~\ref{sec:global} establishes the uniform estimates,
total-trial complexity, gradient-norm convergence, and
stationarity of weak accumulation points, and analyzes the
tangent-space variant.
Section~\ref{sec:numerics} describes the particle implementation,
evaluation costs, and numerical experiments.
Section~\ref{sec:discussion} concludes with the scope of the
results and future directions.
Appendix~\ref{app:convolution} proves the weak-convergence
continuity result for interaction fields, and
Appendix~\ref{app:steihaug} gives the truncated-CG algorithm
and its feasibility and Cauchy-decrease guarantees.

\section{The Wasserstein trust-region method}
\label{sec:method}

\subsection{Problem}

We study the optimization of smooth potential--interaction energies over probability measures in $\Pcal(\R^d)$:
\[
  \min_{\rho\in\Pcal(\R^d)}\ \Fcal(\rho),\qquad
  \Fcal(\rho)=\int V\,d\rho+\frac12\iint W(x-y)\,d\rho(x)\,d\rho(y)
\]
where $\Pcal(\R^d)$ denotes the probability measures with finite second moment. The potential $V$ assigns a cost to each particle position, while $W$ couples positions through the current measure. For smooth $V$ and $W$, the Wasserstein gradient field is $\nabla V+\nabla W*\rho$. This field generates a first-order Wasserstein gradient flow and provides a natural stationarity criterion \cite{AGS2008,Lanzetti2024}. The Wasserstein second variation supplies curvature information \cite{Otto2001,Gigli2012}.

\subsection{Preliminaries}

Let $\Pcal(\R^d)$ denote the probability measures with finite second moment, equipped with the quadratic Wasserstein distance $W_2$. The Otto tangent space at $\rho$ is the $L^2(\rho)$-closure of gradient fields,
\[
  T_\rho\Pcal=\overline{\{\nabla\phi:\phi\in C_c^\infty(\R^d)\}}^{\,L^2(\rho;\R^d)},\qquad \iprho{u}{v}{\rho}=\int u\cdot v\,d\rho,\quad \normrho{s}{\rho}=\iprho{s}{s}{\rho}^{1/2}
\]
The \textbf{Wasserstein gradient} $\gradW\Fcal(\rho)=\nabla\frac{\delta\Fcal}{\delta\rho}\in T_\rho\Pcal$ is the Riesz representative of the first variation in this inner product; see \cite{Lanzetti2024} for a modern variational framework. The intrinsic Wasserstein Hessian acts on tangent directions through second variations along Wasserstein geodesics \cite{Otto2001,Gigli2012}.
We write $\rho_j\rightharpoonup\rho$ for weak convergence of probability measures, $\|B\|_{\rm op}$ for the operator norm on $L^2(\rho)$, and $K*\rho(x)=\int K(x-y)\,d\rho(y)$. We call $\rho$ \emph{stationary} when $\|\gradW\Fcal(\rho)\|_\rho=0$.

For each displacement field $s\in L^2(\rho;\R^d)$, we define the pushforward update
\[
  R_\rho(s):=(\id+s)_\#\rho
\]
This update has three properties:
\begin{itemize}[leftmargin=1.4em]
  \item \textbf{Displacement control.} The measure $(\id,\id+s)_\#\rho$ is a coupling of $\rho$ and $R_\rho(s)$, hence $W_2(\rho,R_\rho(s))\le\normrho{s}{\rho}$.
  \item \textbf{Preservation of finite second moments.} $\int|x+s(x)|^2\,d\rho\le 2\int|x|^2\,d\rho+2\normrho{s}{\rho}^2<\infty$, so $R_\rho(s)\in\Pcal$.
  \item \textbf{First-variation identity.} $R_\rho(0)=\rho$ and, under Assumption~\ref{ass:smooth-class}, differentiation under the integral gives $\frac{d}{dt}\Fcal(R_\rho(ts))\big|_0=\iprho{\gradW\Fcal(\rho)}{s}{\rho}$, so the model's linear term agrees with the first variation along the pushforward curve.
\end{itemize}
\begin{remark}[Ambient search directions]
  \label{rem:ambient}
  The subproblem in Algorithm~\ref{alg:wtr} is posed on $L^2(\rho;\R^d)$. The truncated-CG iteration is carried out in this space: although its initial direction $p_0=-g$ is tangent, a pointwise field such as $x\mapsto\nabla^2V(x)\nabla\phi(x)$ need not be a gradient field. The ambient formulation therefore accommodates all subsequent search directions. Both the pushforward and the second variation are defined on this ambient space. Section~\ref{sec:tangent-variant} gives a tangent-space variant by projecting each model-operator application.
\end{remark}

\label{sec:operator}

For the potential--interaction class in Assumption~\ref{ass:smooth-class}, define $\Bop_\rho$ on $L^2(\rho;\mathbb R^d)$ by
\begin{equation}
  \label{eq:ambient-operator}
  (\Bop_\rho s)(x)
  :=\nabla^2V(x)s(x)
  +\int\nabla^2W(x-y)\bigl(s(x)-s(y)\bigr)\,d\rho(y)
\end{equation}
Because $W$ is even, we have the symmetric bilinear identity
\begin{align}
  \iprho{u}{\Bop_\rho v}{\rho}
  ={} & \int u^\top\nabla^2Vv\,d\rho\notag                    \\
      & +\frac12\iint\bigl(u(x)-u(y)\bigr)^\top\nabla^2W(x-y)
  \bigl(v(x)-v(y)\bigr)\,d\rho(x)d\rho(y)
  \label{eq:ambient-bilinear}
\end{align}
Thus $\Bop_\rho$ is self-adjoint. The corresponding potential and interaction second-variation formulas also appear in \cite[Appendix~C.2]{Bonet2024}. Direct differentiation gives
\begin{equation}
  \label{eq:ambient-second-variation}
  \iprho{s}{\Bop_\rho s}{\rho}
  =\left.\frac{d^2}{dt^2}\Fcal\bigl((\id+ts)_\#\rho\bigr)\right|_{t=0}
\end{equation}
When a smooth gradient field generates a locally geodesic pushforward curve, this is the Otto Hessian quadratic form. The tangent-space operator is the $L^2(\rho)$ projection of $\Bop_\rho$ onto $T_\rho\Pcal$.

\subsection{Wasserstein trust-region algorithm}

Given the current iterate $\rho_k$, set $g_k=\gradW\Fcal(\rho_k)$ and $B_k=\Bop_{\rho_k}$. Form the quadratic model
\[
  m_k(s)=\Fcal(\rho_k)+\iprho{g_k}{s}{\rho_k}+\tfrac12\iprho{s}{B_ks}{\rho_k}
\]
Fix an inner-iteration limit $J_{\max}\in\{1,2,\ldots\}$ and a relative residual tolerance $\varepsilon_{\rm cg}\in[0,1)$. At each outer iteration, use the Steihaug--Toint subsolver in Appendix~\ref{app:steihaug} to approximately solve the subproblem $\min_{\normrho{s}{\rho_k}\le\Delta_k}m_k(s)$. The returned step has at least the \textbf{Cauchy decrease}, namely the reduction in $m_k$ obtained by minimizing it over the feasible part of the negative-gradient ray. In particular,
\[
  m_k(0)-m_k(s_k)\ \ge\ \tfrac12\normrho{g_k}{\rho_k}\,\min\!\Big(\tfrac{\normrho{g_k}{\rho_k}}{\|B_k\|},\,\Delta_k\Big)
\]
with the convention $a/0=+\infty$ for $a>0$. Appendix~\ref{app:steihaug} gives the update and truncation rules and proves feasibility and the Cauchy bound.
We compute the ratio $\varrho_k=\dfrac{\Fcal(\rho_k)-\Fcal(R_{\rho_k}(s_k))}{m_k(0)-m_k(s_k)}$ and accept the step when $\varrho_k\ge\eta_1$. We update the radius by a Powell rule: with $0<\gamma_1<1<\gamma_2$, $\theta\in(0,1)$, and $\Delta_{\max}\in(0,\infty]$,
\[
  \Delta_{k+1}=\begin{cases}\gamma_1\Delta_k,&\varrho_k<\eta_1\\ \Delta_k,&\eta_1\le\varrho_k<\eta_2\\ \min(\gamma_2\Delta_k,\Delta_{\max}),&\varrho_k\ge\eta_2\text{ and }\normrho{s_k}{\rho_k}\ge\theta\Delta_k\end{cases}
\]
In all other cases, set $\Delta_{k+1}=\Delta_k$. Algorithm~\ref{alg:wtr} summarizes one iteration. We assume $0<\Delta_0\le\Delta_{\max}$ and stop if $g_k=0$. If the method terminates, extend the sequence by setting $\rho_j=\rho_k$ for $j\ge k$.

\begin{algorithm}[ht]
  \caption{One iteration of the Wasserstein trust-region method}
  \label{alg:wtr}
  \begin{algorithmic}[1]
    \Require $\rho_k$, $\Delta_k$; $0<\eta_1<\eta_2<1$, $0<\gamma_1<1<\gamma_2$, $\theta\in(0,1)$, $\Delta_{\max}$; $J_{\max}\in\{1,2,\ldots\}$, $\varepsilon_{\rm cg}\in[0,1)$
    \State $g_k \gets \gradW\Fcal(\rho_k)$; \textbf{if} $g_k=0$ \textbf{then stop}; $B_k \gets \Bop_{\rho_k}$
    \State $s_k \gets$ Steihaug--Toint truncated-CG step for $m_k$ on $\|s\|_{\rho_k}\le\Delta_k$ with $J_{\max},\varepsilon_{\rm cg}$ \Comment{Appendix~\ref{app:steihaug}}
    \State $\rho_k^+ \gets R_{\rho_k}(s_k) = (\id+s_k)_\#\rho_k$ \Comment{pushforward update}
    \State $\varrho_k \gets \dfrac{\Fcal(\rho_k)-\Fcal(\rho_k^+)}{m_k(0)-m_k(s_k)}$
    \If{$\varrho_k \ge \eta_1$}
    \State $\rho_{k+1} \gets \rho_k^+$ \Comment{accept}
    \Else
    \State $\rho_{k+1} \gets \rho_k$ \Comment{reject}
    \EndIf
    \State update $\Delta_{k+1}$ from $\Delta_k$, $\varrho_k$, $\|s_k\|_{\rho_k}$ by the Powell rule
  \end{algorithmic}
\end{algorithm}

\section{Global convergence}
\label{sec:global}

This section establishes first-order convergence guarantees for the Wasserstein trust-region method on the smooth potential--interaction class specified below. Uniform Hessian bounds yield estimates for the model operator and model error that hold for every probability measure with finite second moment. Together with the Cauchy decrease and the radius-update rule, these estimates give an $O(\varepsilon^{-2})$ bound on the number of outer trials required to reach an $\varepsilon$-stationary iterate, counting both accepted and rejected trials. We then compare gradients across successive pushforward updates to prove that the Wasserstein-gradient norms converge to zero. Under quadratic coercivity of the potential, every weak accumulation point is stationary. Finally, we show that projection of the model operator onto the Wasserstein tangent space preserves these guarantees with the same constants.

\subsection{Assumptions and preparatory estimates}
\label{sec:functional-class}

\begin{assumption}[Smooth potential--interaction class]
  \label{ass:smooth-class}
  Let
  \[
    \Fcal(\rho)=\underbrace{\int V\,d\rho}_{\Vcal}+\underbrace{\tfrac12\iint W(x-y)\,d\rho(x)\,d\rho(y)}_{\Wcal}
  \]
  where $V,W\in C^2(\R^d)$, $W$ is even,
  \[
    \|\nabla^2V\|_\infty\le C_V,\qquad \|\nabla^2W\|_\infty\le C_W
  \]
  and $V\ge\underline V$, $W\ge\underline W>-\infty$. The Hessian bounds imply linear growth of $\nabla V$ and $\nabla W$; hence $\gradW\Fcal(\rho)=\nabla V+\nabla W*\rho\in L^2(\rho;\R^d)$ for every $\rho\in\Pcal$.
\end{assumption}

\begin{proposition}[Uniform bound on second derivatives along pushforward curves]
  \label{prop:bddhess}
  Under Assumption~\ref{ass:smooth-class}, for every $\rho\in\Pcal$ and $s\in L^2(\rho;\R^d)$, the map $h:\R\to\R$, $h(t):=\Fcal(R_\rho(ts))$, is $C^2$ and satisfies
  \[
    |h''(t)|\le\delta_2\,\normrho{s}{\rho}^2,\qquad \delta_2:=C_V+C_W
  \]
\end{proposition}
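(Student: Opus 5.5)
The plan is to write $h(t)=\int V(x+ts(x))\,d\rho(x)+\frac12\iint W\bigl(x-y+t(s(x)-s(y))\bigr)\,d\rho(x)\,d\rho(y)$, which is just the change-of-variables formula for the pushforward $R_\rho(ts)=(\id+ts)_\#\rho$. We then differentiate twice under the integral sign and bound the resulting integrands pointwise by the Hessian bounds of Assumption~\ref{ass:smooth-class}.

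First, we justify that $h$ is finite and $C^2$. The bound $\|\nabla^2V\|_\infty\le C_V$ gives quadratic growth $|V(z)|\le a+b|z|^2$ and linear growth $|\nabla V(z)|\le a'+C_V|z|$; the same holds for $W$. Since $x$, $s$ lie in $L^2(\rho)$ (finite second moment) and $|x-y+t(s(x)-s(y))|^2\le 4(|x|^2+|y|^2+t^2|s(x)|^2+t^2|s(y)|^2)$, the integrands and their first two $t$-derivatives are dominated on bounded $t$-intervals by integrable functions of the form $c(1+|x|^2+|s(x)|^2)$ (respectively with $y$ added, integrable against $\rho\otimes\rho$). Dominated convergence then gives differentiability twice under the integral. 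Continuity of $h''$ follows from continuity of $\nabla^2V,\nabla^2W$ together with the uniform bound $C_V|s|^2$ (resp.\ $C_W|s(x)-s(y)|^2$), again by dominated convergence.

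Next, we compute
\[
h''(t)=\int s(x)^\top\nabla^2V(x+ts(x))s(x)\,d\rho(x)+\frac12\iint (s(x)-s(y))^\top\nabla^2W(\cdot)(s(x)-s(y))\,d\rho(x)\,d\rho(y),
\]
where $\nabla^2W$ is evaluated at $x-y+t(s(x)-s(y))$. The first term is bounded by $C_V\|s\|_\rho^2$. For the second we use $|s(x)-s(y)|^2\le 2|s(x)|^2+2|s(y)|^2$, which gives $\frac12 C_W\cdot 4\|s\|_\rho^2=2C_W\|s\|_\rho^2$. That is too large by a factor of two, so the main obstacle is recovering the constant $C_W$. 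Instead we expand exactly: $\iint|s(x)-s(y)|^2\,d\rho\,d\rho=2\|s\|_\rho^2-2\bigl|\int s\,d\rho\bigr|^2\le 2\|s\|_\rho^2$. Hence the interaction term is at most $\frac12 C_W\cdot2\|s\|_\rho^2=C_W\|s\|_\rho^2$. Summing the two bounds gives $|h''(t)|\le(C_V+C_W)\|s\|_\rho^2$ for all $t$.

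Finally, note the bound is uniform in $t$ and in $\rho$. This is also consistent with \eqref{eq:ambient-second-variation} at $t=0$, and with applying the same computation at the base measure $R_\rho(ts)$ with displacement $s\circ(\id+ts)^{-1}$ when that is well defined. We do not need that interpretation, since the direct formula above handles all $t$ at once.
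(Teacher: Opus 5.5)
Your proposal is correct and follows essentially the same route as the paper: you differentiate twice under the integral by dominated convergence, bound the potential term by $C_V\|s\|_\rho^2$, and use the variance identity $\iint|s(x)-s(y)|^2\,d\rho\,d\rho\le 2\|s\|_\rho^2$ to obtain $C_W$ rather than $2C_W$ for the interaction term. Your growth-based domination argument for the $C^2$ claim is somewhat more explicit than the paper's.
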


\begin{proof}
  For the potential part $\Vcal(R_\rho(ts))=\int V(x+ts(x))\,d\rho$, the uniform bound on $\nabla^2V$ and dominated convergence justify differentiating under the integral, giving
  \[
    \tfrac{d^2}{dt^2}\Vcal(R_\rho(ts))=\int s^\top\nabla^2V(x+t s(x))\,s\,d\rho,\qquad \Big|\tfrac{d^2}{dt^2}\Vcal\Big|\le C_V\normrho{s}{\rho}^2
  \]
  For the interaction part $\Wcal(R_\rho(ts))=\tfrac12\iint W(x-y+t(s(x)-s(y)))\,d\rho\,d\rho$,
  \[
    \tfrac{d^2}{dt^2}\Wcal=\tfrac12\iint(s(x)-s(y))^\top\nabla^2W(x-y+t(s(x)-s(y)))(s(x)-s(y))\,d\rho\,d\rho
  \]
  and the variance identity $\iint|s(x)-s(y)|^2\,d\rho\,d\rho=2\normrho{s}{\rho}^2-2\big|\!\int s\,d\rho\big|^2\le 2\normrho{s}{\rho}^2$ bounds this by $C_W\normrho{s}{\rho}^2$. Summing yields $|h''(t)|\le(C_V+C_W)\normrho{s}{\rho}^2$.
\end{proof}

Write $m_\rho$ for the quadratic model at $\rho$. The proposition yields the following consequences in the ambient space $L^2(\rho)$:

\begin{enumerate}[leftmargin=1.6em,label=(\arabic*)]
  \item \textbf{Bounded model operator.} The quadratic-form bound and self-adjointness imply $\|\Bop_\rho\|_{\mathrm{op}}\le\delta_2$, uniformly in $\rho$.
  \item \textbf{Quadratic model error.} $\big|\Fcal(R_\rho(s))-m_\rho(s)\big|=\big|\int_0^1(1-t)(h''(t)-h''(0))\,dt\big|\le\delta_2\normrho{s}{\rho}^2$.
  \item \textbf{Stationarity.} A measure is stationary precisely when $\nabla V+\nabla W*\rho=0$ $\rho$-almost everywhere.
\end{enumerate}

\subsection{Complexity in the total number of outer trust-region trials}

\begin{theorem}[Total outer trust-region trial complexity]
  \label{thm:global}
  Under Assumption~\ref{ass:smooth-class}, run Algorithm~\ref{alg:wtr} with $B_k=\Bop_{\rho_k}$ from any $\rho_0\in\Pcal$. Then $\Fcal(\rho_k)$ is nonincreasing and
  \[
    \liminf_{k\to\infty}\, \normrho{\gradW\Fcal(\rho_k)}{\rho_k}=0
  \]
  For the first outer trial index $K_\varepsilon$ with $\normrho{\gradW\Fcal(\rho_{K_\varepsilon})}{\rho_{K_\varepsilon}}\le\varepsilon$, one has $K_\varepsilon=O(\varepsilon^{-2})$.
\end{theorem}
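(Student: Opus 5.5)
The proof follows the classical trust-region argument, using the uniform estimates (1)--(2) after Proposition~\ref{prop:bddhess}. Because $\|\Bop_\rho\|_{\rm op}\le\delta_2$ and the model error is at most $\delta_2\normrho{s}{\rho}^2$ for every $\rho\in\Pcal$, the constants do not depend on the iterate. This compensates for the change of ambient space $L^2(\rho_k)$ from one step to the next.

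\textbf{Step 1: monotonicity.} On an accepted trial, $\varrho_k\ge\eta_1>0$ and the Cauchy decrease is positive, since $g_k\ne0$ before termination. Hence $\Fcal(\rho_{k+1})\le\Fcal(\rho_k)-\eta_1\bigl(m_k(0)-m_k(s_k)\bigr)$. On a rejected trial, $\rho_{k+1}=\rho_k$. So $\Fcal(\rho_k)$ is nonincreasing. It is bounded below by $\underline V+\tfrac12\underline W$.

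\textbf{Step 2: radius lower bound.} Suppose $\normrho{g_k}{\rho_k}>\varepsilon$. The Cauchy bound with $\|B_k\|\le\delta_2$ gives
\[
m_k(0)-m_k(s_k)\ge\tfrac12\varepsilon\min(\varepsilon/\delta_2,\Delta_k).
\]
If also $\Delta_k\le\varepsilon/\delta_2$, then, using $\normrho{s_k}{\rho_k}\le\Delta_k$ and model error (2),
\[
|\varrho_k-1|\le\frac{\delta_2\Delta_k^2}{\tfrac12\varepsilon\Delta_k}=\frac{2\delta_2\Delta_k}{\varepsilon}.
\]
Hence $\varrho_k\ge\eta_2$ whenever $\Delta_k\le(1-\eta_2)\varepsilon/(2\delta_2)=:c\varepsilon$; note $c\varepsilon\le\varepsilon/\delta_2$. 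Such trials are very successful and never shrink the radius. The radius therefore only decreases from values above $c\varepsilon$. By induction, as long as all gradients stay above $\varepsilon$,
\[
\Delta_k\ge\min(\Delta_0,\gamma_1 c\varepsilon)=:\Delta_{\min}(\varepsilon)=\Theta(\varepsilon).
\]

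\textbf{Step 3: counting successful trials.} Each successful trial with $k<K_\varepsilon$ decreases $\Fcal$ by at least
\[
\tfrac12\eta_1\varepsilon\min(\varepsilon/\delta_2,\Delta_{\min})=\Theta(\varepsilon^2).
\]
Hence the number of successful trials before $K_\varepsilon$ is at most $(\Fcal(\rho_0)-\underline V-\tfrac12\underline W)/\Theta(\varepsilon^2)=O(\varepsilon^{-2})$.

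\textbf{Step 4: counting unsuccessful trials.} This is the main bookkeeping obstacle, because the Powell rule only enlarges the radius when the step is near the boundary. I would therefore bound the number of increases by the number of successful trials $S$. Each unsuccessful trial multiplies $\Delta$ by $\gamma_1$, and each successful trial multiplies it by at most $\gamma_2$. Since $\Delta_k\ge\Delta_{\min}$ throughout and $\Delta_0\le\Delta_{\max}$, we get $\gamma_1^{U}\gamma_2^{S}\Delta_0\ge\Delta_{\min}$. This gives
\[
U\le\frac{S\log\gamma_2+\log(\Delta_0/\Delta_{\min})}{|\log\gamma_1|}.
\]
The term $\log(1/\varepsilon)$ is dominated by $\varepsilon^{-2}$, so $K_\varepsilon\le S+U=O(\varepsilon^{-2})$.

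The liminf statement follows for every $\varepsilon>0$, since $K_\varepsilon$ is finite. Termination with $g_k=0$ is covered trivially by the extension convention.
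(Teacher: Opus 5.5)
Your proposal is correct and follows essentially the same argument as the paper: the Cauchy decrease together with the uniform model-error bound forces acceptance for radii of order $\varepsilon/\delta_2$. This yields the radius floor $\min(\Delta_0,\gamma_1 c\varepsilon)$, an $O(\varepsilon^{-2})$ count of accepted trials from the energy bound, and a count of rejected trials from $\Delta_0\gamma_1^{U}\gamma_2^{S}\ge\Delta_{\min}$.

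The differences are cosmetic. You use the very-successful threshold $\eta_2$, with constant $(1-\eta_2)/2$, where the paper uses $\eta_1$, with constant $(1-\eta_1)/2$; the latter already suffices because any accepted trial leaves the radius unshrunk. You also do not separately handle the degenerate case $\delta_2=0$, where $V,W$ are constant and the method stops at once, which the paper treats explicitly.
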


\begin{proof}
  The proof follows the standard trust-region energy-decrease and radius-accounting argument \cite{ConnGouldToint2000,NocedalWright2006}, using the uniform model estimates. Let $g_k=\gradW\Fcal(\rho_k)$ and $G_k=\|g_k\|_{\rho_k}$, and
  \[
    p_k:=m_k(0)-m_k(s_k)
  \]
  for the predicted decrease. If $G_k=0$, the algorithm stops. Otherwise Proposition~\ref{prop:steihaug-cauchy} gives the Cauchy decrease
  \begin{equation}
    \label{eq:cauchy-detailed}
    p_k\ge\frac12G_k\min\!\left(\frac{G_k}{\delta_2},\Delta_k\right)>0
  \end{equation}
  If $\delta_2=0$, the bounded-below objective is constant on the admissible class, so the claim is immediate; hence assume $\delta_2>0$.

  Since $m_k(0)=\Fcal(\rho_k)$, the model-error estimate gives the explicit ratio bound
  \begin{equation}
    \label{eq:ratio-detailed}
    1-\varrho_k
    =\frac{\Fcal(R_{\rho_k}(s_k))-m_k(s_k)}{p_k}
    \le\frac{|\Fcal(R_{\rho_k}(s_k))-m_k(s_k)|}{p_k}
    \le\frac{\delta_2\Delta_k^2}{p_k}
  \end{equation}
  If $\Delta_k\le G_k/\delta_2$, combining the two estimates yields
  \begin{equation}
    \label{eq:small-radius-accepted}
    1-\varrho_k\le\frac{2\delta_2\Delta_k}{G_k}
  \end{equation}
  Let $\alpha=(1-\eta_1)/2$. Thus
  \[
    \Delta_k\le\alpha\frac{G_k}{\delta_2}
    \quad\Longrightarrow\quad
    \varrho_k\ge\eta_1
  \]
  so every radius satisfying this bound is accepted.

  Fix $\varepsilon>0$ and consider a finite prefix on which $G_k>\varepsilon$. Let $K_\varepsilon$ denote the first index with $G_{K_\varepsilon}\le\varepsilon$. Define
  \begin{equation}
    \label{eq:radius-floor}
    \overline\Delta_\varepsilon:=\alpha\frac{\varepsilon}{\delta_2},
    \qquad
    \underline\Delta_\varepsilon:=\min(\Delta_0,\gamma_1\overline\Delta_\varepsilon)
  \end{equation}
  Then $\Delta_k\ge\underline\Delta_\varepsilon$ throughout the prefix. Accepted steps do not reduce the radius. If a step is rejected, \eqref{eq:small-radius-accepted} and $G_k>\varepsilon$ imply $\Delta_k>\overline\Delta_\varepsilon$, hence $\Delta_{k+1}=\gamma_1\Delta_k>\gamma_1\overline\Delta_\varepsilon$. The induction proves the radius floor.

  For
  \[
    0<\varepsilon\le\varepsilon_0:=\frac{\delta_2\Delta_0}{\gamma_1\alpha}
  \]
  the floor is $\underline\Delta_\varepsilon=\gamma_1\alpha\varepsilon/\delta_2$. Each accepted outer trust-region trial before termination therefore satisfies
  \begin{align}
    \Fcal(\rho_k)-\Fcal(\rho_{k+1})
     & =\varrho_kp_k \notag           \\
     & \ge\frac{\eta_1}{2}\varepsilon
    \min\!\left(\frac{\varepsilon}{\delta_2},\underline\Delta_\varepsilon\right)
    =c_{\rm dec}\frac{\varepsilon^2}{\delta_2},
    \label{eq:successful-decrease}
  \end{align}
  where $c_{\rm dec}:=\eta_1\gamma_1\alpha/2$. Since $\Fcal\ge\underline V+\tfrac12\underline W$, the number $S_\varepsilon$ of accepted outer trust-region trials before $K_\varepsilon$ obeys
  \begin{equation}
    \label{eq:successful-count}
    S_\varepsilon\le
    \frac{(\Fcal(\rho_0)-\inf\Fcal)\delta_2}
    {c_{\rm dec}\varepsilon^2}
  \end{equation}

  We now count rejected trials globally. Let $U_\varepsilon$ be their total number before $K_\varepsilon$. A rejection multiplies the radius by $\gamma_1$, whereas any accepted trial multiplies it by at most $\gamma_2$. The multiplicative radius updates together with \eqref{eq:radius-floor} give
  \[
    \underline\Delta_\varepsilon
    \le\Delta_{K_\varepsilon}
    \le\Delta_0\gamma_1^{U_\varepsilon}\gamma_2^{S_\varepsilon}
  \]
  and hence
  \begin{equation}
    \label{eq:rejected-count}
    U_\varepsilon\le
    \frac{S_\varepsilon\log\gamma_2+\log(\Delta_0/\underline\Delta_\varepsilon)}
    {|\log\gamma_1|}
  \end{equation}
  Combining \eqref{eq:successful-count}--\eqref{eq:rejected-count},
  \begin{align}
    K_\varepsilon
    \le{} & \left(1+\frac{\log\gamma_2}{|\log\gamma_1|}\right)
    \frac{(\Fcal(\rho_0)-\inf\Fcal)\delta_2}
    {c_{\rm dec}\varepsilon^2} \notag                                     \\
          & +\frac{\log(\Delta_0/\underline\Delta_\varepsilon)}{|\log\gamma_1|}
    =O(\varepsilon^{-2})
    \label{eq:total-count}
  \end{align}
  If no iterate reached the tolerance, the estimates would hold for arbitrarily long finite prefixes, contradicting the uniform bounds above. Thus $K_\varepsilon<\infty$ for every sufficiently small $\varepsilon$, and hence $\liminf_kG_k=0$. Accepted trials decrease $\Fcal$; rejected trials leave $\rho_k$ unchanged.
\end{proof}

\subsection{Convergence of the gradient norms}
\label{sec:lim-upgrade}

To control the full gradient-norm sequence, compose the next-iterate gradient with each accepted pushforward map so that both gradients lie in $L^2(\rho_k)$.

\begin{lemma}[Cross-iterate gradient comparison]
  \label{lem:cross-iterate}
  Let $\rho\in\Pcal$, $s\in L^2(\rho;\R^d)$, $T:=\id+s$, and $\rho':=T_\#\rho=R_\rho(s)$. Then, with $L_g:=C_V+\sqrt2\,C_W$,
  \[
    \Bigl|\,\normrho{\gradW\Fcal(\rho')}{\rho'}-\normrho{\gradW\Fcal(\rho)}{\rho}\,\Bigr|
    \;\le\;\bigl\|\gradW\Fcal(\rho')\circ T-\gradW\Fcal(\rho)\bigr\|_{L^2(\rho)}
    \;\le\;L_g\,\normrho{s}{\rho}
  \]
\end{lemma}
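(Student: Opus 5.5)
The first inequality is a reverse triangle inequality once we observe that composition with $T$ is an isometry from $L^2(\rho')$ into $L^2(\rho)$. For any $f\in L^2(\rho';\R^d)$, the change of variables for pushforwards gives $\int|f\circ T|^2\,d\rho=\int|f|^2\,d\rho'$. Applying this to $f=\gradW\Fcal(\rho')$ shows $\normrho{\gradW\Fcal(\rho')}{\rho'}=\|\gradW\Fcal(\rho')\circ T\|_{L^2(\rho)}$. The reverse triangle inequality in $L^2(\rho)$ then yields the first bound. Finiteness of all the norms follows from Assumption~\ref{ass:smooth-class}, since $\rho'\in\Pcal$.

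For the second inequality I will write out the composed gradient explicitly. Using the pushforward in the convolution,
\[
  \gradW\Fcal(\rho')(T(x))=\nabla V(x+s(x))+\int\nabla W\bigl(x-y+s(x)-s(y)\bigr)\,d\rho(y).
\]
The difference with $\gradW\Fcal(\rho)(x)=\nabla V(x)+\int\nabla W(x-y)\,d\rho(y)$ then splits into two pieces. The potential piece is $a(x):=\nabla V(x+s(x))-\nabla V(x)$. The interaction piece is $b(x):=\int[\nabla W(x-y+s(x)-s(y))-\nabla W(x-y)]\,d\rho(y)$.

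The potential piece is straightforward. The Hessian bound gives that $\nabla V$ is $C_V$-Lipschitz, so $|a(x)|\le C_V|s(x)|$ and therefore $\|a\|_{L^2(\rho)}\le C_V\normrho{s}{\rho}$.

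For the interaction piece, the same Lipschitz argument gives $|b(x)|\le C_W\int|s(x)-s(y)|\,d\rho(y)$. By Jensen or Cauchy--Schwarz in $y$, $|b(x)|^2\le C_W^2\int|s(x)-s(y)|^2\,d\rho(y)$. Integrating in $x$ and using the variance identity from the proof of Proposition~\ref{prop:bddhess},
\[
  \|b\|_{L^2(\rho)}^2\le C_W^2\iint|s(x)-s(y)|^2\,d\rho\,d\rho\le 2C_W^2\normrho{s}{\rho}^2.
\]
This gives $\|b\|\le\sqrt2\,C_W\normrho{s}{\rho}$ and explains the constant $\sqrt2$ in $L_g$. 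Minkowski's inequality then gives $\|a+b\|\le(C_V+\sqrt2 C_W)\normrho{s}{\rho}$.

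The only point requiring care is the interaction term. The convolution must be rewritten against $\rho$ via the pushforward, and the square must be bounded inside the integral (Jensen) rather than bounding $|s(x)|+|s(y)|$ crudely, which would lose the sharp constant. I also need measurability and integrability of the integrands. These follow from the linear growth of $\nabla W$ and $s\in L^2(\rho)$, so I do not expect a genuine obstacle there.
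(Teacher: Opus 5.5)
Your proposal is correct and matches the paper's proof essentially step by step. The first bound comes from the pushforward change of variables followed by the reverse triangle inequality in $L^2(\rho)$. The second comes from splitting into the $C_V$-Lipschitz potential term and the interaction term, which Jensen, Fubini, and the variance identity bound to give the $\sqrt2\,C_W$ constant.
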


\begin{proof}
  Write $g_\mu:=\gradW\Fcal(\mu)=\nabla V+\nabla W*\mu$. The Hessian bounds give linear growth, so $g_\mu\in L^2(\mu)$ for every $\mu\in\Pcal$. Moreover,
  \[
    \int |x|^2\,d\rho'=\int |x+s(x)|^2\,d\rho\le 2\int |x|^2\,d\rho+2\normrho{s}{\rho}^2<\infty,
  \]
  and therefore $\rho'=T_\#\rho\in\Pcal$. The change-of-variables identity for pushforwards applied to the nonnegative Borel function $|g_{\rho'}|^2$ gives
  \[
    \normrho{g_{\rho'}}{\rho'}^2=\int|g_{\rho'}|^2\,d\rho'=\int|g_{\rho'}\circ T|^2\,d\rho
  \]
  The first inequality is the reverse triangle inequality in $L^2(\rho)$. For the second, decompose pointwise:
  \[
    g_{\rho'}(T(x))-g_\rho(x)
    =\bigl[\nabla V(T(x))-\nabla V(x)\bigr]
    +\bigl[(\nabla W*\rho')(T(x))-(\nabla W*\rho)(x)\bigr]
  \]
  The potential term is bounded by $C_V|s(x)|$, since $\nabla V$ is $C_V$-Lipschitz; its $L^2(\rho)$ norm is at most $C_V\normrho{s}{\rho}$. For the interaction term, using $\rho'=T_\#\rho$ and $\nabla W(0)=0$ by evenness, the pushforward turns the $\rho'$-integral into a $\rho$-integral over transported arguments:
  \[
    (\nabla W*\rho')(T(x))-(\nabla W*\rho)(x)
    =\int\Bigl(\nabla W\bigl(x-x'+s(x)-s(x')\bigr)-\nabla W(x-x')\Bigr)d\rho(x')
  \]
  The norm of this term is pointwise bounded by $C_W\int|s(x)-s(x')|\,d\rho(x')$, since $\nabla W$ is $C_W$-Lipschitz. Jensen's inequality in the inner integral, Fubini, and the variance identity from the proof of Proposition~\ref{prop:bddhess} give
  \[
    \Bigl\|\,C_W\!\int|s(\cdot)-s(x')|\,d\rho(x')\Bigr\|_{L^2(\rho)}^2
    \le C_W^2\iint|s(x)-s(x')|^2\,d\rho(x')\,d\rho(x)
    \le2C_W^2\normrho{s}{\rho}^2
  \]
  The triangle inequality then gives $L_g\normrho{s}{\rho}$.
\end{proof}

\begin{theorem}[Gradient convergence]
  \label{thm:lim}
  In the setting of Theorem~\ref{thm:global},
  \[
    \lim_{k\to\infty}\,\normrho{\gradW\Fcal(\rho_k)}{\rho_k}=0
  \]
\end{theorem}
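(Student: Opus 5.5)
We argue by contradiction, following the classical lim-upgrade argument for trust-region methods (as in \cite{ConnGouldToint2000,NocedalWright2006}). The cross-iterate comparison of Lemma~\ref{lem:cross-iterate} replaces the Lipschitz continuity of the gradient used in the Euclidean setting. Write $G_k=\normrho{\gradW\Fcal(\rho_k)}{\rho_k}$. If the method terminates, the sequence is eventually constant with $G_k=0$, so assume it does not. Suppose $\limsup_k G_k>0$, and fix $\varepsilon>0$ with $G_k\ge 2\varepsilon$ for infinitely many $k$. Theorem~\ref{thm:global} gives $\liminf_k G_k=0$. Hence for each such index $m$ there is a first later index $\ell(m)>m$ with $G_{\ell(m)}<\varepsilon$. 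We can then extract infinitely many disjoint windows $m\le k<\ell(m)$ on which $G_k\ge\varepsilon$.

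\textbf{Step 1: decrease on a window.} Any accepted step $k$ inside a window satisfies
\[
\Fcal(\rho_k)-\Fcal(\rho_{k+1})\ge\eta_1 p_k\ge\tfrac{\eta_1}{2}\varepsilon\min\!\left(\tfrac{\varepsilon}{\delta_2},\Delta_k\right),
\]
by the Cauchy bound \eqref{eq:cauchy-detailed}. Moreover $\normrho{s_k}{\rho_k}\le\Delta_k$. This leaves two cases. If $\Delta_k\ge\varepsilon/\delta_2$, the decrease is at least $\eta_1\varepsilon^2/(2\delta_2)$. Such steps can occur only finitely often overall, because $\Fcal(\rho_k)$ is nonincreasing and bounded below by $\underline V+\tfrac12\underline W$ (Theorem~\ref{thm:global}). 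So for all large windows every accepted step has $\Delta_k<\varepsilon/\delta_2$, and then
\[
\Fcal(\rho_k)-\Fcal(\rho_{k+1})\ge\tfrac{\eta_1\varepsilon}{2}\normrho{s_k}{\rho_k}.
\]
Rejected steps contribute zero to both sides. Summing over a window gives
\[
\sum_{k=m}^{\ell(m)-1,\ \text{accepted}}\normrho{s_k}{\rho_k}\le\frac{2}{\eta_1\varepsilon}\bigl(\Fcal(\rho_m)-\Fcal(\rho_{\ell(m)})\bigr).
\]
Convergence of the monotone sequence $\Fcal(\rho_k)$ makes the right-hand side tend to $0$ as $m\to\infty$.

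\textbf{Step 2: gradient-norm telescoping.} Apply Lemma~\ref{lem:cross-iterate} at each accepted step, with $\rho=\rho_k$, $s=s_k$ and $\rho'=\rho_{k+1}$. This gives $|G_{k+1}-G_k|\le L_g\normrho{s_k}{\rho_k}$. On rejected steps $\rho_{k+1}=\rho_k$, so $G_{k+1}=G_k$. Telescoping over the window yields
\[
\varepsilon< G_m-G_{\ell(m)}\le L_g\sum_{\text{accepted }k\in[m,\ell(m))}\normrho{s_k}{\rho_k}\to0.
\]
This is a contradiction, so $\lim_k G_k=0$.

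The main point requiring care is Step 2. The iterates live in different spaces $L^2(\rho_k)$, so the usual argument bounding $\|g(x_\ell)-g(x_m)\|$ by $L\|x_\ell-x_m\|$ is unavailable. Lemma~\ref{lem:cross-iterate} circumvents this by comparing only scalar norms step by step, and the scalar triangle inequality then telescopes. The remaining bookkeeping in Step 1 is mild: it consists of discarding the finitely many large-radius accepted steps, or equivalently using $\min(\varepsilon/\delta_2,\Delta_k)\ge\min(\varepsilon/\delta_2,1)\cdot\min(1,\normrho{s_k}{\rho_k})$ after noting that the total decrease is finite.
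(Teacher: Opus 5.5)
Your proposal is correct and follows essentially the same argument as the paper: disjoint windows running from an index with $G_k\ge 2\varepsilon$ to the first later index with $G_k<\varepsilon$, summation of the Cauchy-decrease bound over the accepted steps in each window, and scalar telescoping of the gradient norms via Lemma~\ref{lem:cross-iterate}. The only difference is cosmetic: you discard, once for all windows, the finitely many accepted steps with $\Delta_k\ge\varepsilon/\delta_2$, whereas the paper observes that $\Sigma_j\to0$ forces $\min(\varepsilon/\delta_2,\Delta_k)=\Delta_k$ on late segments.
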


\begin{proof}
  By Theorem~\ref{thm:global}, $\liminf_kG_k=0$, $\Fcal(\rho_k)$ is nonincreasing, and $\inf\Fcal>-\infty$. The case $\delta_2=0$ is trivial, since the bounded-below objective is constant; assume $\delta_2>0$. Suppose, for contradiction, that $\limsup_kG_k>2\varepsilon$ for some $\varepsilon>0$.

  \emph{Segments.} Construct $m_1<l_1<m_2<l_2<\cdots$ inductively: $m_j$ is the first index after $l_{j-1}$ (with $l_0:=0$) with $G_{m_j}\ge2\varepsilon$, which exists since $G_k\ge2\varepsilon$ infinitely often; $l_j$ is the first index after $m_j$ with $G_{l_j}<\varepsilon$, which exists since $\liminf_kG_k=0$. By minimality, $G_k\ge\varepsilon$ on $[m_j,l_j)$, and the segments are pairwise disjoint.

  \emph{Decrease along a segment.} Let $A_j$ be the accepted indices in $[m_j,l_j)$. On the segment, $G_k\ge\varepsilon$ and \eqref{eq:cauchy-detailed} give $p_k>0$, so $\varrho_k$ is well defined, and each $k\in A_j$ satisfies
  \[
    \Fcal(\rho_k)-\Fcal(\rho_{k+1})=\varrho_kp_k
    \ge\frac{\eta_1\varepsilon}{2}\min\Bigl(\frac{\varepsilon}{\delta_2},\Delta_k\Bigr)
  \]
  The positivity of $\eta_1$ yields this decrease. Rejected steps change neither $\rho_k$ nor $\Fcal(\rho_k)$. Summing over the segment and writing $D_j:=\Fcal(\rho_{m_j})-\Fcal(\rho_{l_j})\ge0$, monotonicity, boundedness below, and disjointness give $\sum_jD_j<\infty$, hence $D_j\to0$, hence
  \[
    \Sigma_j:=\sum_{k\in A_j}\min\Bigl(\frac{\varepsilon}{\delta_2},\Delta_k\Bigr)
    \le\frac{2D_j}{\eta_1\varepsilon}\longrightarrow0
  \]

  \emph{The total radius on late segments vanishes.} Once $\Sigma_j<\varepsilon/\delta_2$, every $k\in A_j$ has $\min(\varepsilon/\delta_2,\Delta_k)=\Delta_k$, and $\sum_{k\in A_j}\Delta_k=\Sigma_j\to0$.

  \emph{Chaining.} Between $m_j$ and $l_j$ the measure changes only at accepted indices, where $\rho_{k+1}=R_{\rho_k}(s_k)$ with $\normrho{s_k}{\rho_k}\le\Delta_k$. Lemma~\ref{lem:cross-iterate} at each accepted index and $G_{k+1}=G_k$ at each rejected index give
  \[
    \varepsilon\le G_{m_j}-G_{l_j}
    \le\sum_{k=m_j}^{l_j-1}|G_{k+1}-G_k|
    \le L_g\!\sum_{k\in A_j}\normrho{s_k}{\rho_k}
    \le L_g\!\sum_{k\in A_j}\Delta_k
    \longrightarrow0
  \]
  a contradiction. Hence no $\varepsilon>0$ satisfies $\limsup_kG_k>2\varepsilon$; that is, $\lim_kG_k=0$.
\end{proof}

\begin{corollary}[Stationarity of accumulation points]
  \label{cor:accum}
  Assume in addition quadratic coercivity, $V(x)\ge c_0|x|^2-c_1$ with $c_0>0$. Then $(\rho_k)$ is tight. Every weak accumulation point $\bar\rho$ satisfies $\nabla V+\nabla W*\bar\rho=0$ $\bar\rho$-almost everywhere and is Wasserstein-stationary.
\end{corollary}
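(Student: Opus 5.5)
Tightness comes from a uniform second-moment bound. The objective is nonincreasing by Theorem~\ref{thm:global}, so $\Fcal(\rho_k)\le\Fcal(\rho_0)$ for all $k$. Coercivity together with $W\ge\underline W$ then gives
\[
c_0\int|x|^2\,d\rho_k-c_1+\tfrac12\underline W\le\Fcal(\rho_k)\le\Fcal(\rho_0),
\]
so $M:=\sup_k\int|x|^2\,d\rho_k<\infty$. Chebyshev's inequality yields tightness, and Prokhorov's theorem supplies weakly convergent subsequences. Fix a subsequence $\rho_{k_j}\rightharpoonup\bar\rho$. Lower semicontinuity of $\mu\mapsto\int|x|^2\,d\mu$ under weak convergence gives $\int|x|^2\,d\bar\rho\le M$, so $\bar\rho\in\Pcal$. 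The second moments of $\rho_{k_j}$ are bounded but not necessarily uniformly integrable, so the convergence need not hold in $W_2$; the argument must work with weak convergence only.

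Write $g_\mu=\nabla V+\nabla W*\mu$. The plan has two steps. First, $g_{\rho_{k_j}}\to g_{\bar\rho}$ locally uniformly on $\R^d$. Second, this is combined with $\normrho{g_{\rho_{k_j}}}{\rho_{k_j}}\to0$ from Theorem~\ref{thm:lim} by a lower-semicontinuity argument.

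For the first step, the target is the weak-convergence continuity result of Appendix~\ref{app:convolution}. It would be proved as follows. The function $\nabla W$ is $C_W$-Lipschitz, so $|\nabla W(z)|\le|\nabla W(0)|+C_W|z|=C_W|z|$, using $\nabla W(0)=0$. For $x$ in a compact set $K$, truncate the integrand with a cutoff at radius $R$. The tail satisfies
\[
\int_{|y|>R}|\nabla W(x-y)|\,d\rho_{k_j}(y)\le C_W\int_{|y|>R}(|x|+|y|)\,d\rho_{k_j}(y)\le C_W(|x|+R)\frac{M}{R^2}+C_W\frac{M}{R},
\]
uniformly in $j$, by Chebyshev. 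This linear-growth-tail control is the main obstacle: it is exactly where the bounded second moments replace uniform integrability. On the truncated part the integrand is bounded and continuous, so weak convergence gives pointwise convergence. The family $\{(\nabla W*\rho_{k_j})|_K\}$ is equi-Lipschitz with constant $C_W$, so pointwise convergence upgrades to uniform convergence on $K$. The same tail bound applies to $\bar\rho$, which also has second moment at most $M$.

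For the second step, fix $R>0$ and the bounded continuous function $\phi_R(x)=\min(|g_{\bar\rho}(x)|^2,R)\,\chi_R(x)$, where $\chi_R$ is a continuous cutoff supported in the ball of radius $2R$. By the first step,
\[
\min(|g_{\rho_{k_j}}|^2,R)\,\chi_R\to\phi_R
\]
uniformly. Hence
\[
\int\phi_R\,d\bar\rho=\lim_j\int\phi_R\,d\rho_{k_j}=\lim_j\int\min(|g_{\rho_{k_j}}|^2,R)\,\chi_R\,d\rho_{k_j}\le\liminf_j\normrho{g_{\rho_{k_j}}}{\rho_{k_j}}^2=0.
\]
Letting $R\to\infty$ and applying monotone convergence gives $\normrho{g_{\bar\rho}}{\bar\rho}=0$. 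Equivalently, $\nabla V+\nabla W*\bar\rho=0$ $\bar\rho$-almost everywhere, which is stationarity as characterized after Proposition~\ref{prop:bddhess}.
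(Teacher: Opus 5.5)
Your proposal is correct and follows essentially the same route as the paper. Both proofs obtain a uniform second-moment bound from monotonicity plus coercivity and get tightness by Chebyshev/Markov. Both then prove locally uniform convergence of $\nabla W*\rho_{k_j}$ by the tail-truncation and equi-Lipschitz argument of Lemma~\ref{lem:convolution-continuity}, and finish with a compactly supported cutoff, weak convergence, and monotone convergence to get $\int|g_{\bar\rho}|^2\,d\bar\rho=0$. The extra truncation $\min(\cdot,R)$ is harmless but unnecessary, since the compact support of $\chi_R$ already makes $\chi_R|g_{\bar\rho}|^2$ bounded and continuous. Do state explicitly that $0\le\chi_R\le1$ and $\chi_R\uparrow1$, since your inequality and the final limit rely on both.
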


\begin{proof}
  Since $\Fcal(\rho_k)\le\Fcal(\rho_0)$ and $\Wcal\ge\tfrac12\underline W$, quadratic coercivity gives
  \[
    \int |x|^2\,d\rho_k(x)\le c_0^{-1}\!\left(\Fcal(\rho_0)-\tfrac12\underline W+c_1\right)=:m_2
  \]
  uniformly. Therefore Markov's inequality gives, for every $R>0$,
  \[
    \sup_k\rho_k(\{|x|>R\})\le\frac{m_2}{R^2}\xrightarrow[R\to\infty]{}0,
  \]
  so the family is tight. Weak accumulation points therefore exist, and every one of them is the weak limit of some subsequence. Let $\rho_{k_j}\rightharpoonup\bar\rho$ be any weakly convergent subsequence; by Theorem~\ref{thm:lim}, $\|g_{k_j}\|_{\rho_{k_j}}\to0$ along it. Write $G_j(x):=\nabla V(x)+\nabla W*\rho_{k_j}(x)$ and $\bar G:=\nabla V+\nabla W*\bar\rho$. Since $\nabla W$ is globally Lipschitz ($\|\nabla^2W\|_\infty\le C_W$), weak convergence together with the uniform second-moment bound gives
  \[
    \nabla W*\rho_{k_j}\longrightarrow\nabla W*\bar\rho
    \quad\text{locally uniformly}
  \]
  by Lemma~\ref{lem:convolution-continuity} in Appendix~\ref{app:convolution}. Hence $|G_j|^2\to|\bar G|^2$ locally uniformly, and $|\bar G(x)|^2$ is continuous with quadratic growth. Choose continuous compactly supported cutoffs with $0\le\chi_R\le1$ and $\chi_R\uparrow1$. Then
  \[
    \int\chi_R\,|\bar G|^2\,d\bar\rho=\lim_j\int\chi_R\,|G_j|^2\,d\rho_{k_j}\ \le\ \liminf_j\int|G_j|^2\,d\rho_{k_j}=\lim_j\|g_{k_j}\|_{\rho_{k_j}}^2=0
  \]
  The equality follows from compact-support weak convergence and local uniform convergence; monotone convergence then gives
  \[
    \int|\bar G|^2\,d\bar\rho
    =\lim_{R\to\infty}\int\chi_R|\bar G|^2\,d\bar\rho=0
  \]
  and therefore $\bar G=0$ $\bar\rho$-almost everywhere.
\end{proof}

\subsection{A tangent-space variant}
\label{sec:tangent-variant}

Let $P_\rho$ be the orthogonal projection from $L^2(\rho;\R^d)$ onto the closed space $T_\rho\Pcal$, and write $g_\rho=\nabla V+\nabla W*\rho$. Define the compressed operator and its model by
\begin{equation}
  \label{eq:tangent-model}
  \mathsf H_\rho:=P_\rho\Bop_\rho\big|_{T_\rho\Pcal},\qquad
  m_\rho^T(s):=\Fcal(\rho)+\iprho{g_\rho}{s}{\rho}
  +\frac12\iprho{s}{\mathsf H_\rho s}{\rho},\quad s\in T_\rho\Pcal
\end{equation}

\begin{corollary}[Trust-region steps in the tangent space]
  \label{cor:tangent-tr}
  Under Assumption~\ref{ass:smooth-class}, run Algorithm~\ref{alg:steihaug} in $T_{\rho_k}\Pcal$ for $m_{\rho_k}^T$, with operator action $p\mapsto\mathsf H_{\rho_k}p$, inner-iteration limit $J_{\max}\in\{1,2,\ldots\}$, and tolerance $\varepsilon_{\rm cg}\in[0,1)$. Use the pushforward, ratio test, stopping rule, and outer parameters from \S\ref{sec:method}. Starting from any $\rho_0\in\Pcal$, the objective is nonincreasing and $\|g_{\rho_k}\|_{\rho_k}\to0$. For sufficiently small $\varepsilon>0$, the first $\varepsilon$-stationary index satisfies \eqref{eq:total-count}. Under quadratic coercivity, the iterates are tight and every weak accumulation point is stationary.
\end{corollary}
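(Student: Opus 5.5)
The plan is to check that the only ingredients used in the proofs of Theorem~\ref{thm:global}, Theorem~\ref{thm:lim} and Corollary~\ref{cor:accum} survive the projection, with the same constants. Those ingredients are: (i) the Cauchy decrease \eqref{eq:cauchy-detailed} with $\|B_k\|\le\delta_2$; (ii) the model-error bound $|\Fcal(R_\rho(s))-m(s)|\le\delta_2\|s\|_\rho^2$ for feasible steps; (iii) the displacement bound $\|s_k\|_{\rho_k}\le\Delta_k$ feeding Lemma~\ref{lem:cross-iterate}; and (iv) monotonicity together with the lower bound on $\Fcal$. Items (iii) and (iv) do not depend on the model at all. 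Lemma~\ref{lem:cross-iterate} holds for every $s\in L^2(\rho;\R^d)$, and tangent steps are a special case. The tightness and accumulation argument uses only $\Fcal(\rho_k)\le\Fcal(\rho_0)$ and $G_k\to0$. So the work reduces to (i) and (ii).

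\textbf{Step 1: the operator $\mathsf H_\rho$.} $T_\rho\Pcal$ is a closed subspace of $L^2(\rho;\R^d)$, so it is a Hilbert space in its own right. For $u,v\in T_\rho\Pcal$ we have $\iprho{u}{\mathsf H_\rho v}{\rho}=\iprho{u}{\Bop_\rho v}{\rho}$. Hence $\mathsf H_\rho$ is self-adjoint on $T_\rho\Pcal$, and $\|\mathsf H_\rho\|_{\rm op}\le\|P_\rho\|\,\|\Bop_\rho\|_{\rm op}\le\delta_2$. Also $g_\rho=\gradW\Fcal(\rho)\in T_\rho\Pcal$ by Assumption~\ref{ass:smooth-class}, since the Wasserstein gradient is the Riesz representative in $T_\rho\Pcal$. (For $\nabla V+\nabla W*\rho$ this is the standard fact that gradients of $C^1$ functions with linear growth lie in the $L^2(\rho)$-closure of $\nabla C_c^\infty$; I would invoke it via cutoff approximation if needed.) Consequently Algorithm~\ref{alg:steihaug}, run in the Hilbert space $T_{\rho_k}\Pcal$ with a bounded self-adjoint operator, satisfies Proposition~\ref{prop:steihaug-cauchy} verbatim. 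All CG iterates stay in $T_{\rho_k}\Pcal$, since $p_0=-g$ is tangent and each operator application is projected. This yields feasibility and $p_k\ge\tfrac12G_k\min(G_k/\delta_2,\Delta_k)$.

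\textbf{Step 2: the model error.} For $s\in T_\rho\Pcal$ we have $\iprho{s}{\mathsf H_\rho s}{\rho}=\iprho{s}{P_\rho\Bop_\rho s}{\rho}=\iprho{P_\rho s}{\Bop_\rho s}{\rho}=\iprho{s}{\Bop_\rho s}{\rho}$. Therefore $m^T_\rho(s)=m_\rho(s)$ on the tangent space. The ambient model-error estimate (consequence (2) of Proposition~\ref{prop:bddhess}) then applies unchanged, and so does the ratio bound \eqref{eq:ratio-detailed}.

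\textbf{Step 3: rerunning the arguments.} With (i)--(iv) in hand, I would repeat the proof of Theorem~\ref{thm:global} line by line. This gives the acceptance threshold $\alpha G_k/\delta_2$, the radius floor \eqref{eq:radius-floor}, the per-step decrease \eqref{eq:successful-decrease}, and the counts \eqref{eq:successful-count}--\eqref{eq:total-count} with identical constants. I would then repeat the segment argument of Theorem~\ref{thm:lim} to obtain $G_k\to0$, and the proof of Corollary~\ref{cor:accum} for tightness and stationarity of limits.

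The main obstacle I expect is Step 1's claim that $g_\rho$ lies in $T_\rho\Pcal$ for arbitrary $\rho\in\Pcal$, including atomic measures. If $g_\rho\notin T_\rho\Pcal$, the linear term would need $P_\rho g_\rho$, and stationarity would be measured by a different norm. I would settle this by approximating $\nabla\psi$, with $\psi=V+W*\rho\in C^1$ of quadratic growth, by $\nabla(\chi_R\psi_\epsilon)$. Here $\psi_\epsilon$ is a mollification and $\chi_R$ a cutoff, and dominated convergence in $L^2(\rho)$ uses the linear growth of $\nabla\psi$ and the finite second moment of $\rho$.
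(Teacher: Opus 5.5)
Your proposal is correct and follows essentially the same route as the paper. You prove $g_\rho\in T_\rho\Pcal$ by cutoff plus mollification; the paper cuts off $U_\rho$ before mollifying while you mollify first, which makes no difference. You use orthogonality and contractivity of $P_\rho$ to get self-adjointness, $\|\mathsf H_\rho\|_{\rm op}\le\delta_2$, and $m^T_\rho=m_\rho$ on $T_\rho\Pcal$, and then rerun Proposition~\ref{prop:steihaug-cauchy}, Theorem~\ref{thm:global}, Lemma~\ref{lem:cross-iterate} with Theorem~\ref{thm:lim}, and Corollary~\ref{cor:accum} with the same constants.
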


\begin{proof}
  If $\delta_2=0$, the bounded-below objective is constant and the algorithm stops immediately. Assume $\delta_2>0$. First, $g_\rho\in T_\rho\Pcal$. For $\rho\in\Pcal$, set $U_\rho=V+W*\rho-(V+W*\rho)(0)$. Differentiation under the convolution gives $U_\rho\in C^2$, $\nabla U_\rho=g_\rho$, and $\|\nabla^2U_\rho\|_\infty\le\delta_2$. The Hessian bound gives
  \[
    |g_\rho(x)|\le |g_\rho(0)|+\delta_2|x|,\qquad |U_\rho(x)|\le |g_\rho(0)|\,|x|+\tfrac12\delta_2|x|^2
  \]
  Thus $g_\rho$ has at most linear growth and $U_\rho$ at most quadratic growth. Choose smooth cutoffs $\chi_R$ equal to one on $B_R$, supported in $B_{2R}$, with $|\nabla\chi_R|\le C/R$. For $R\ge1$ these growth bounds give
  \[
    |\nabla(\chi_RU_\rho)(x)-g_\rho(x)|
    \le C_\rho(1+|x|)\mathbf1_{\{|x|\ge R\}}
  \]
  with $C_\rho$ independent of $R$. The right-hand side tends to zero in $L^2(\rho)$. Mollifying each compactly supported $C^2$ potential gives uniform convergence of its gradients, so a diagonal sequence proves the asserted membership.

  For $u,v\in T_\rho\Pcal$, orthogonality gives $\iprho{u}{\mathsf H_\rho v}{\rho}=\iprho{u}{\Bop_\rho v}{\rho}$. Self-adjointness of $\Bop_\rho$ and contractivity of $P_\rho$ therefore imply that $\mathsf H_\rho$ is self-adjoint and
  \[
    \|\mathsf H_\rho\|_{\rm op}\le\delta_2,\qquad
    \iprho{s}{\mathsf H_\rho s}{\rho}=\iprho{s}{\Bop_\rho s}{\rho},\qquad
    m_\rho^T(s)=m_\rho(s)\quad(s\in T_\rho\Pcal)
  \]
  The subsolver and Proposition~\ref{prop:steihaug-cauchy} apply in this Hilbert space. Proposition~\ref{prop:bddhess} then gives, for $G_k:=\|g_{\rho_k}\|_{\rho_k}>0$,
  \[
    p_k\ge\frac12G_k\min\!\left\{\frac{G_k}{\delta_2},\Delta_k\right\},\qquad
    |\Fcal(R_{\rho_k}(s_k))-m_{\rho_k}^T(s_k)|\le\delta_2\|s_k\|_{\rho_k}^2
  \]
  These estimates reproduce the radius and trial-count bounds of Theorem~\ref{thm:global}. The gradient comparison lemma also applies to tangent steps, so the same proof gives \(G_k\to0\) for this variant. Quadratic coercivity then yields Corollary~\ref{cor:accum}.
\end{proof}

\begin{remark}[Finite-support measures]
  \label{rem:atomic-tangent}
  For $\rho=\sum_{i=1}^Nw_i\delta_{x_i}$ with distinct support points and positive weights summing to one, $T_\rho\Pcal=L^2(\rho;\R^d)$. Given vectors $v_i$, choose smooth cutoffs $\psi_i$ with disjoint compact supports and $\psi_i=1$ near $x_i$. Then
  \[
    \phi(x)=\sum_{i=1}^N\psi_i(x)\,v_i\cdot(x-x_i)
  \]
  belongs to $C_c^\infty$ and satisfies $\nabla\phi(x_i)=v_i$. Thus $P_\rho=I$ and $\mathsf H_\rho=\Bop_\rho$. Pushforwards preserve finite support, so the tangent and ambient algorithms coincide from a finite-support initial measure.
\end{remark}

Theorem~\ref{thm:global} gives objective monotonicity, the $O(\varepsilon^{-2})$ bound on accepted plus rejected trials, and $\liminf_kG_k=0$, where $G_k=\|\gradW\Fcal(\rho_k)\|_{\rho_k}$. Lemma~\ref{lem:cross-iterate} controls the gradient norm across accepted pushforward steps and yields $G_k\to0$ (Theorem~\ref{thm:lim}). Corollary~\ref{cor:accum} adds stationarity of weak accumulation points under quadratic coercivity, and Corollary~\ref{cor:tangent-tr} transfers these conclusions, with the same constants, to the tangent-space variant.

\section{Numerical experiments}
\label{sec:numerics}

\subsection{The empirical reduction}

For an empirical measure $\rho_N=N^{-1}\sum_{i=1}^N\delta_{X_i}$, the particle objective and metric are
\[
  \Fcal_N(X)=\frac1N\sum_{i=1}^N V(X_i)
  +\frac1{2N^2}\sum_{i,j=1}^N W(X_i-X_j),
  \qquad
  \|S\|_{\rho_N}^2=\frac1N\sum_{i=1}^N|S_i|^2
\]
For distinct particle locations, identify a field in $L^2(\rho_N;\mathbb R^d)$ with its labelled values $S=(S_1,\ldots,S_N)$. The inner product is
\[
  \langle S,T\rangle_{\rho_N}=\frac1N\sum_{i=1}^N S_i\cdot T_i
\]
All norms and model-operator representations below use this inner product.
If particle locations coincide, the labels represent one support point with summed weight and equal admissible field values. The gradient formula and \eqref{eq:ambient-operator} preserve this subspace, so the gradient-started truncated-CG iteration remains in it. In this case, the update is the finite-dimensional trust-region method in the $L^2(\rho_N)$ metric. The measure-level estimates give bounds on the total number of trials that do not depend on $N$; in particular, $\|\Bop_\rho\|_{\rm op}\le\delta_2=C_V+C_W$ and the model error is at most $\delta_2\|s\|_\rho^2$. The Cauchy decrease satisfies
\[
  p_k\ge \frac12G_k\min\left\{\frac{G_k}{\delta_2},\Delta_k\right\}
\]
Consequently, the bound in Theorem~\ref{thm:global} is independent of $N$ provided
\[
  \sup_N\left(\Fcal_N(X_0^{(N)})-\inf_X\Fcal_N(X)\right)<\infty
\]
and the same trust-region parameters are used for every $N$.

However, the arithmetic cost of one trial may grow with $N$. In the $L^2(\rho_N)$ inner product, the interaction part of the model operator is
\[
  \bigl(\Bop_{\rho_N}^{\rm int}S\bigr)_i
  =\frac1N\sum_{j=1}^N\nabla^2W(X_i-X_j)(S_i-S_j)
\]
and it satisfies $\nabla_X^2\Fcal_N(X)S=N^{-1}\Bop_{\rho_N}S$ when the left-hand side uses the ordinary Euclidean metric. A direct all-pairs evaluation requires $O(N^2)$ pairwise terms. For radial kernels such as the Gaussian interactions below, one gradient or model-operator application costs $O(N^2d)$; a generic dense $d\times d$ pairwise Hessian gives $O(N^2d^2)$. With $J$ truncated-CG applications, these costs become $O(JN^2d)$ and $O(JN^2d^2)$, respectively.

\subsection{A Gaussian-core soft-particle model for fourth-generation dendrimers in an isotropic harmonic trap}

Dendrimers are highly branched macromolecules whose coarse-grained
degrees of freedom can be represented by their centers of mass.
We use ``G4'' for a dendrimer with four successive branching levels
from its core. Gaussian-core interactions provide a finite-overlap
repulsive model for soft particle centers
\cite{LouisBolhuisHansen2000,LouisBolhuisHansenMeijer2000}.
For flexible fourth-generation dendrimers, monomer-density and
small-angle neutron-scattering measurements motivate a Gaussian
effective pair interaction \cite{Likos2002}. Near its equilibrium
position, an optical trap acts as a linear spring
\cite{NeumanBlock2004}, so we model the confinement by an isotropic
harmonic potential.

The Gaussian-core pair interaction is
\begin{equation}
  \label{eq:dendrimer-energy}
  W_{\rm G}(z)
  =
  \varepsilon
  \exp\!\left(
    -\frac{|z|^2}{2\sigma^2}
  \right),
  \qquad
  \varepsilon>0,
  \quad
  \sigma>0
\end{equation}
where $\varepsilon$ is the finite overlap-energy scale and $\sigma$
is the effective soft-particle length scale. We use reduced units
with
\[
  \varepsilon=1,
  \qquad
  \sigma=\frac{1}{\sqrt{2}}
\]

The external potential is the isotropic harmonic trap potential
\begin{equation}
  \label{eq:dendrimer-trap}
  V_{\rm trap}(x)
  =
  \frac{\kappa}{2}|x|^2,
  \qquad
  \kappa=0.08
\end{equation}
where $\kappa$ is the reduced trap stiffness. The corresponding
mean-field energy is
\begin{equation}
  \label{eq:dendrimer-functional}
  \mathcal F_{\rm G}(\rho)
  =
  \frac{\kappa}{2}
  \int |x|^2\,d\rho(x)
  +
  \frac{\varepsilon}{2}
  \iint
  \exp\!\left(
    -\frac{|x-y|^2}{2\sigma^2}
  \right)
  d\rho(x)d\rho(y)
\end{equation}

For the empirical measure
$\rho_N=N^{-1}\sum_{i=1}^N\delta_{X_i}$, the numerical energy is
\begin{equation}
  \label{eq:dendrimer-empirical-energy}
  \mathcal F_{{\rm G},N}(X)
  =
  \frac{\kappa}{2N}
  \sum_{i=1}^N |X_i|^2
  +
  \frac{\varepsilon}{2N^2}
  \sum_{i,j=1}^N
  \exp\!\left(
    -\frac{|X_i-X_j|^2}{2\sigma^2}
  \right)
\end{equation}

The Gaussian interaction and the harmonic trap have globally bounded
Hessians. In particular,
\[
  \|\nabla^2 W_{\rm G}\|_\infty
  \le \frac{\varepsilon}{\sigma^2}
\]
so this model belongs to the smooth potential--interaction class
covered by the convergence analysis.

The numerical experiment represents dendrimer centers of mass in two
dimensions. It uses $N=128$ particles and eight fixed Gaussian
initializations with scale $2.4$. The trust-region radius is
initialized at $0.35$ and bounded above by $2$, and iterations stop
when the Wasserstein gradient norm falls below $2\times10^{-4}$.
Across the eight seeds, the median energy decreases from $0.4743$ to
$0.1696$ in a median of $20$ outer trials (range $19$--$24$), with a
median of $14$ accepted trials. For comparison, fixed-step
Wasserstein gradient descent with step size $0.08$ uses the full
$120$-trial budget and has median terminal energy $0.1921$.
The trust-region energy is nonincreasing along
every accepted sequence. Figure~\ref{fig:dendrimer-gaussian-traces}(b)
shows the objective trajectories for the two methods.

\begin{figure}[H]
  \centering
  \begin{minipage}[t]{0.45\textwidth}
    \centering
    \includegraphics[width=\textwidth]{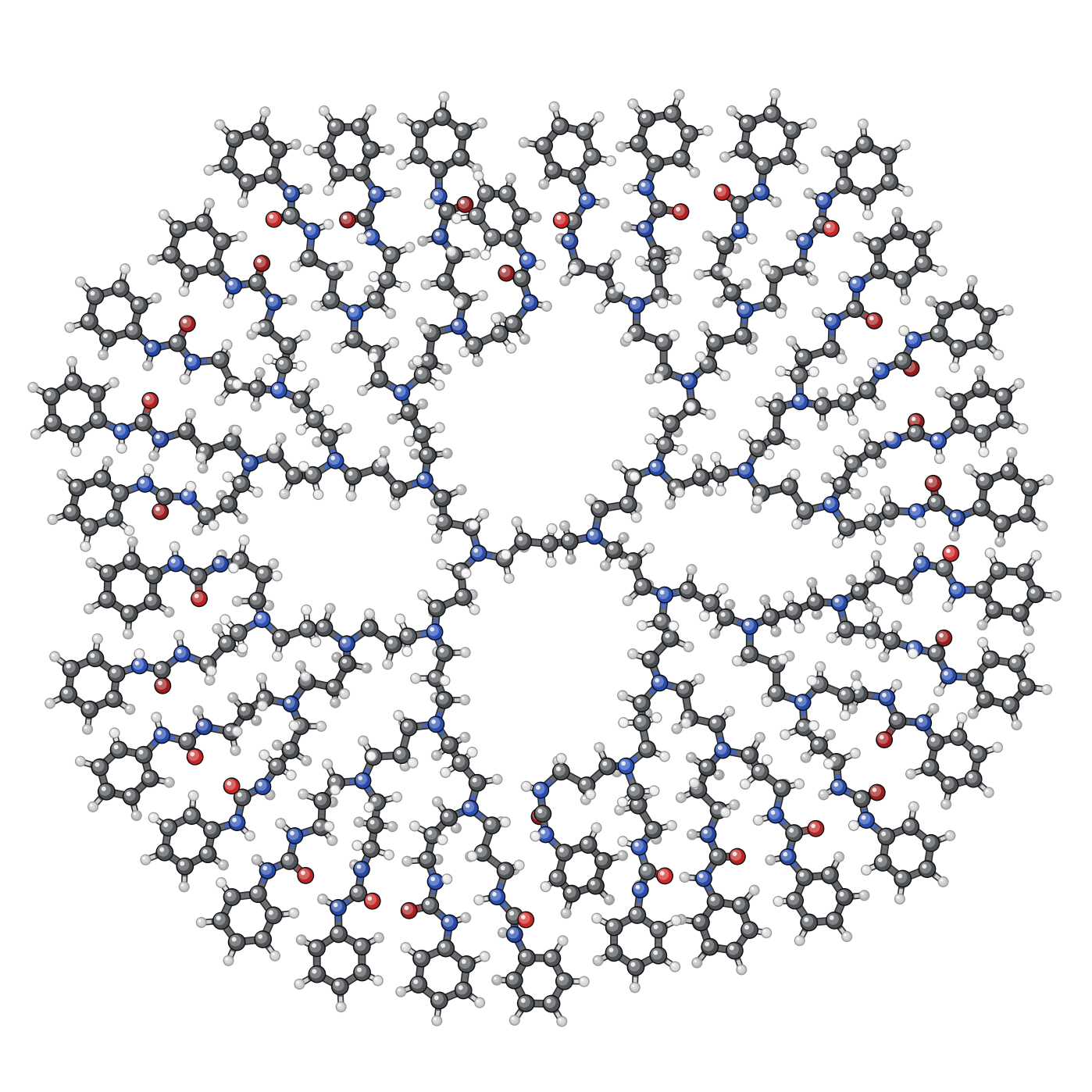}\par
    \smallskip
    \textbf{(a)} Ball-and-stick model of a fourth-generation
    poly(propylene imine) dendrimer with a 1,4-diaminobutane core and
    32 phenylurea end groups, the flexible G4 dendrimer of
    \cite{Rosenfeldt2009}. Carbon, nitrogen, oxygen, and hydrogen atoms
    are shown in grey, blue, red, and white, respectively.
  \end{minipage}\hfill
  \begin{minipage}[t]{0.51\textwidth}
    \centering
    \includegraphics[width=\textwidth]{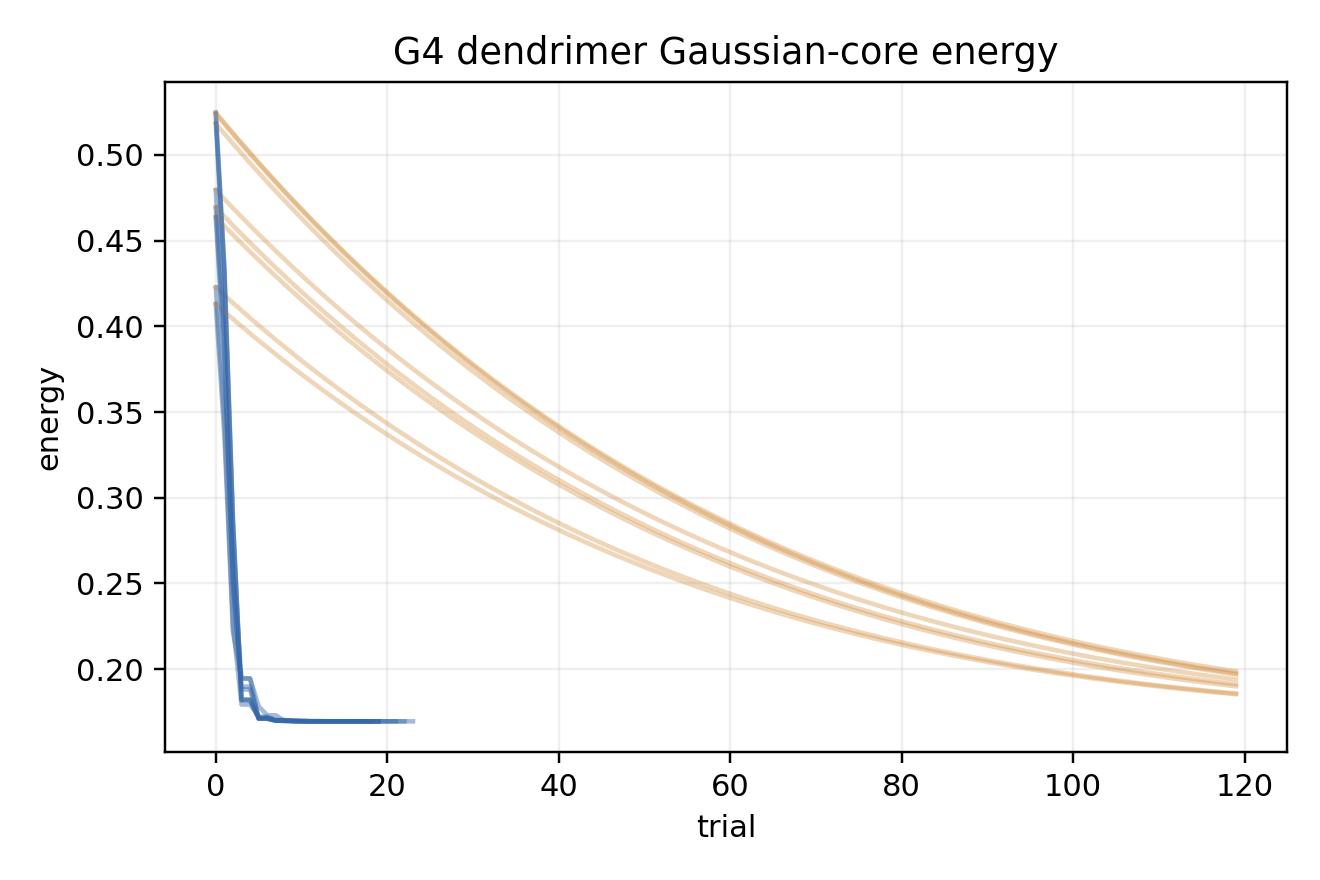}\par
    \smallskip
    \textbf{(b)} Objective trajectories for the Gaussian-core
    soft-particle model. Blue curves show the Wasserstein trust-region
    method and orange curves show fixed-step Wasserstein gradient
    descent, using the same eight initializations. Trust-region runs
    stop at the stated gradient tolerance; the baseline uses the
    $120$-trial budget.
  \end{minipage}
  \caption{A molecular model of a G4 dendrimer and the
  corresponding two-dimensional Gaussian-core soft-particle
  optimization experiment in an isotropic harmonic trap.}
  \label{fig:dendrimer-gaussian-traces}
\end{figure}
\subsection{Maximum-mean-discrepancy fitting and component ablations}

Let $\pi_M=M^{-1}\sum_{m=1}^M\delta_{Y_m}$ be a fixed target sample and let $k(x,y)=\tilde k(x-y)$ be an even, positive-definite, translation-invariant kernel. The biased empirical squared MMD satisfies \cite{Arbel2019}
\begin{align}
  \frac12\operatorname{MMD}_k^2(\rho_N,\pi_M)
  ={} & -\frac1{NM}\sum_{i=1}^N\sum_{m=1}^M k(X_i-Y_m)
  +\frac1{2N^2}\sum_{i,j=1}^N k(X_i-X_j)+C_{\pi_M}
  \label{eq:empirical-mmd-energy}
\end{align}
where $C_{\pi_M}:=(2M^2)^{-1}\sum_{m,n=1}^M k(Y_m-Y_n)$ is independent of $X$. Thus, after dropping this constant, the particle-dependent loss has the potential--interaction form with
\[
  V_{\pi_M}(x)=-\frac1M\sum_{m=1}^M k(x-Y_m),
  \qquad W(z)=k(z)
\]
We take an equal-weight mixture of Gaussian kernels,
\[
  k(z)=\frac14\sum_{h\in\{0.22,0.45,0.90,1.80\}}
  \exp\!\left(-\frac{|z|^2}{2h^2}\right)
\]
so $V_{\pi_M}$ and $W$ have globally bounded Hessians.

For each banana, two-moons, and ring target, we use a fixed training sample with $M=384$ points and a separate holdout sample of $768$ points. For each of eight fixed Gaussian initializations, we optimize an empirical measure with $N=1000$ particles, using a $2\times10^{-4}$ gradient tolerance and a cap of $70$ outer trials. We compare three variants: (i) the full method; (ii) a Cauchy-only linear-model variant that sets $B_k=0$ and uses the boundary step $s_k=-\Delta_k g_k/\|g_k\|_{\rho_k}$ while retaining the exact ratio and radius updates; and (iii) a no-radius-growth variant that retains the quadratic model and rejection shrinkage but disables radius expansion.

The full and no-radius-growth variants reach the gradient tolerance in every run, whereas all $24$ Cauchy-only runs reach the $70$-trial cap. For the banana, two-moons, and ring targets, respectively, the median final holdout values of $\tfrac12\operatorname{MMD}_k^2$ are $1.419\times10^{-3}$, $1.674\times10^{-3}$, and $8.349\times10^{-4}$ for the full method, compared with $1.478\times10^{-3}$, $1.777\times10^{-3}$, and $8.846\times10^{-4}$ for the Cauchy-only variant. On all three targets, the full method has final median holdout losses $4.1$--$5.8\%$ lower. Disabling radius growth changes every final median by at most $0.09\%$; the median number of outer trials increases from $(32,29.5,21)$ to $(33,35.5,27)$, an increase of $1$--$6$ trials. Figure~\ref{fig:mmd-targets} gives the fitted clouds, holdout trajectories, and seed-level final values.

\begin{figure}[H]
  \centering
  \includegraphics[width=\textwidth]{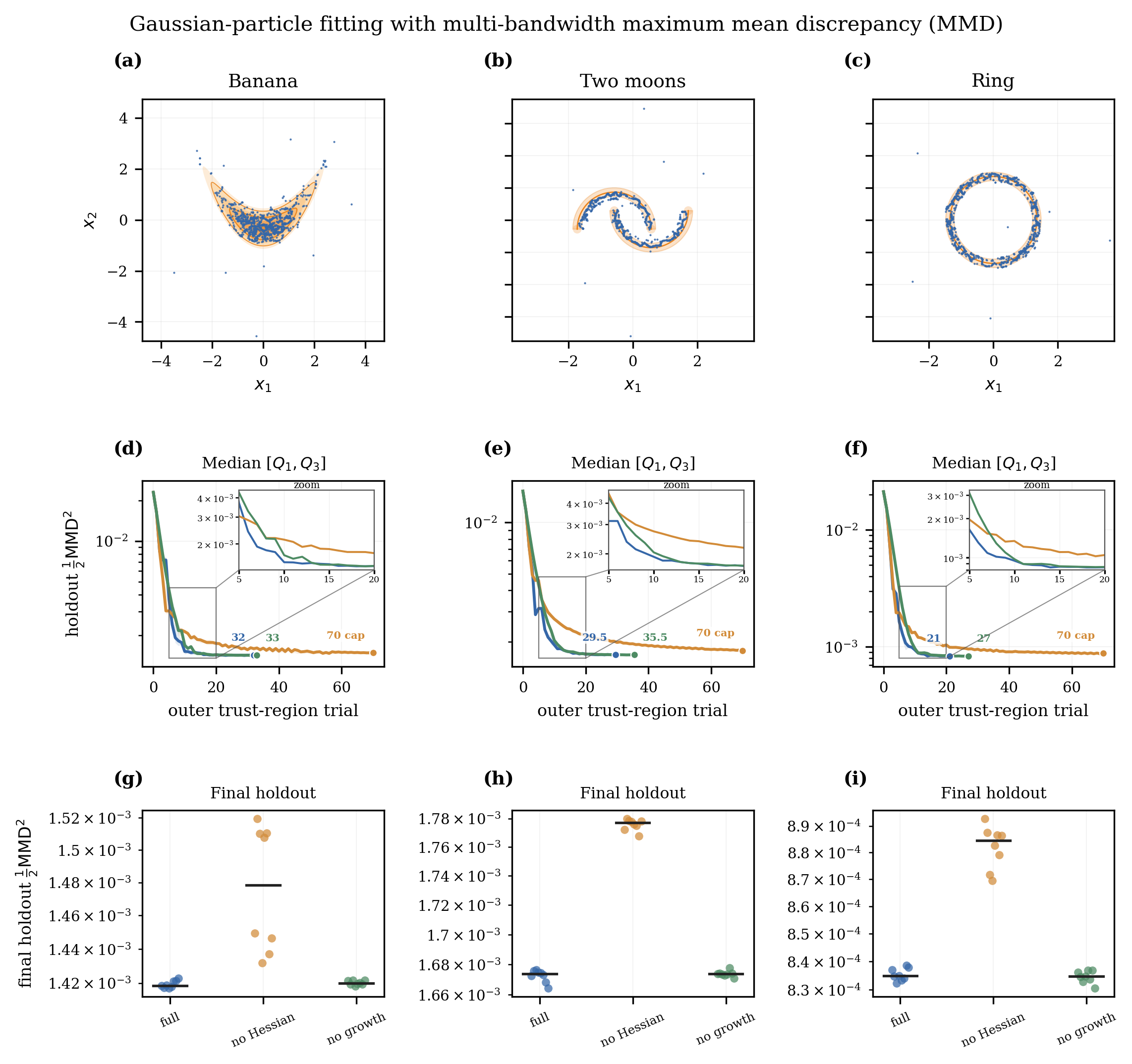}
  \caption{Particle approximation of the banana, two-moons, and ring targets under the multi-bandwidth maximum-mean-discrepancy loss ($N=1000$). (a)--(c) Orange curves show the analytic target geometry; blue points show the seed-0 final particles from the full method. All runs start from eight fixed $\mathcal N(0,0.95^2I_2)$ Gaussian clouds, omitted for clarity. (d)--(f) Medians and interquartile bands use recorded holdout losses and end at the labeled median termination trials; \emph{cap} is the 70-trial budget. Upper-right insets enlarge the outlined low-loss regions during trials 5--20. (g)--(i) Seed-level final holdout losses; horizontal segments mark medians. Blue, orange, and green denote the full, Cauchy-only, and no-radius-growth variants. Variants share the initialization and samples within each target and seed.}
  \label{fig:mmd-targets}
\end{figure}

\subsection{Scaling study}

To study the scaling of outer-trial counts without conflating it with the cost of dense pairwise interactions, we use the interaction $W_q(z)=0.05|z|^2$ with the nonconvex potential
\[
  v(t)=\tfrac14\bigl(\log(1+t^2)-1\bigr)^2,
  \qquad
  V_d(x)=\sum_{\ell=1}^d a_\ell v(x_\ell),
  \qquad
  a_\ell=32^{(\ell-1)/(d-1)}\quad(d>1)
\]
and $a_1=1$ when $d=1$. The interaction gradient and model-operator action reduce to $0.1(X_i-\bar X)$ and $0.1(S_i-\bar S)$, so each evaluation costs $O(Nd)$. The origin is a strict local maximum and the wells occur at $|x_\ell|=\sqrt{e-1}$. To initialize near that stationary maximum without starting exactly at it, we use
\[
  X^{(0)}_{i\ell}=\max\{0.01,0.08+0.015Z_{i\ell}\},
  \qquad Z_{i\ell}\stackrel{\rm iid}{\sim}\mathcal N(0,1)
\]
The trust-region parameters are $(\Delta_0,\Delta_{\max})=(0.5,2)$, $(\eta_1,\eta_2)=(0.1,0.75)$, $(\gamma_1,\gamma_2)=(0.25,2)$, and $\theta=0.8$. We stop at gradient norm $10^{-5}$, and cap the run at $250$ outer trials and $50$ inner-CG iterations.

Figure~\ref{fig:wide-scaling} reports results for 21 logarithmically spaced particle counts from $10$ to $10^7$ at fixed $d=20$, and 21 logarithmically spaced dimensions from $1$ to $10^4$ at fixed $N=1000$. The particle and dimension axes span six and four decades, respectively. Each grid point uses twelve fixed seeds, for $504$ runs in total. Across the sweep of $N$, the median total number of trials ranges from $13$ to $59$, with $13$ $[10,13]$ at $N=10$ and $59$ $[54.25,62]$ at $N=10^7$. Across the sweep of $d$, it ranges from $5$ to $92$, with $5$ $[5,5.25]$ at $d=1$ and $92$ $[92,92]$ at $d=10^4$.

\begin{figure}[H]
  \centering
  \includegraphics[width=\textwidth]{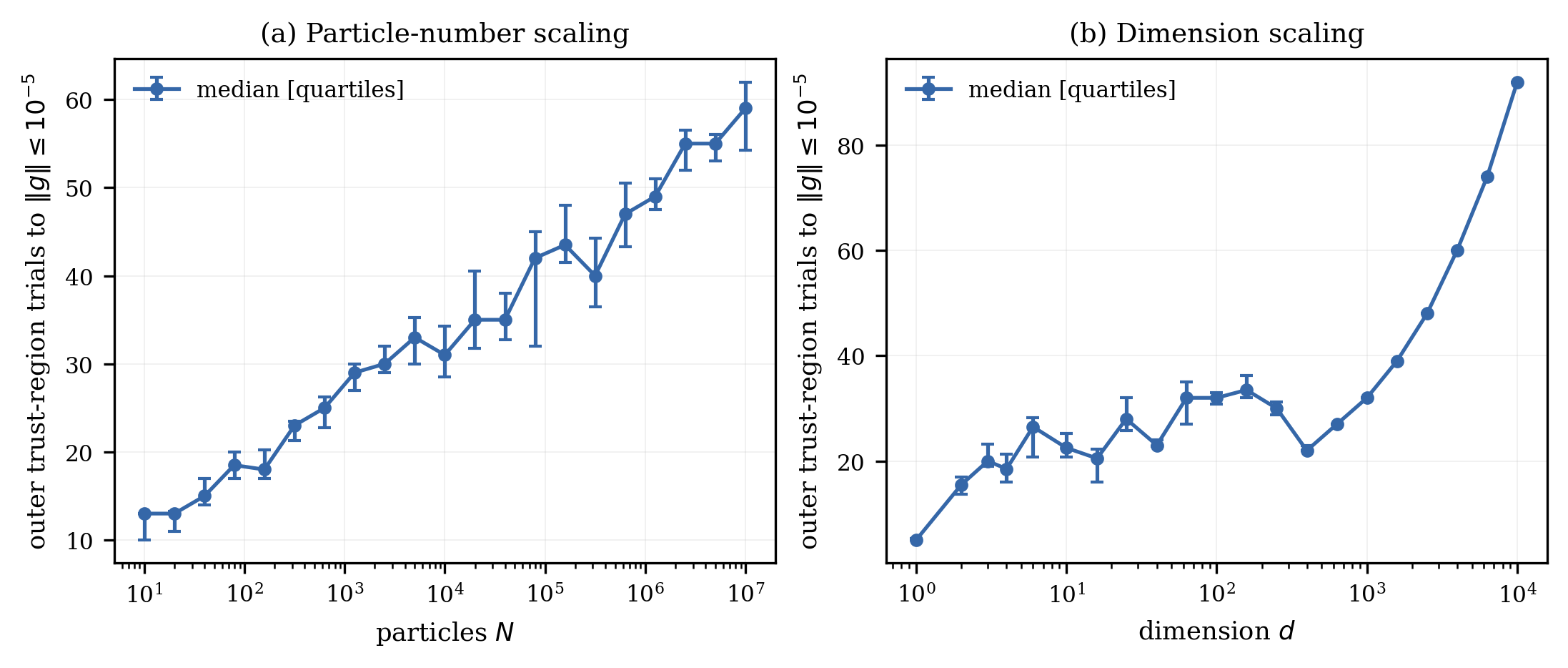}
  \caption{Outer trust-region trials over wide logarithmic grids. (a) Twenty-one particle counts from $10$ to $10^7$ at fixed $d=20$. (b) Twenty-one dimensions from $1$ to $10^4$ at fixed $N=1000$. Points are medians over twelve seeds and bars span the first and third quartiles.}
  \label{fig:wide-scaling}
\end{figure}

\clearpage

\subsection{Full Gaussian-interaction runtime comparison}

We compare the complete potential--interaction objective
\[
  \begin{aligned}
    \mathcal F_N(X) & =\frac1N\sum_{i=1}^N V_d(X_i)
    +\frac{\lambda}{2N^2}\sum_{i,j=1}^N
    \exp\!\left(-\frac{|X_i-X_j|^2}{2\sigma^2}\right),     \\
    V_d(x)          & =\sum_{\ell=1}^{20}a_\ell v(x_\ell),
    \qquad v(t)=\tfrac14(\log(1+t^2)-1)^2
  \end{aligned}
\]
with $a_\ell=32^{(\ell-1)/19}$, $N=1000$, and $d=20$. Thus
$W(z)=\lambda\exp(-|z|^2/(2\sigma^2))$. All three methods use the same
objective and its full gradient; the trust-region method additionally applies
the model operator in its truncated-CG subproblem. The
interaction gradient is
\[
  g_i^{\rm int}=-\frac{\lambda}{N\sigma^2}\sum_j K_{ij}(X_i-X_j),
  \qquad K_{ij}=\exp\!\left(-\frac{|X_i-X_j|^2}{2\sigma^2}\right),
\]
and the interaction part of $\mathsf B_XS$ is
\[
  \frac{\lambda}{N}\sum_jK_{ij}
  \left(\frac{(X_i-X_j)(X_i-X_j)^\top}{\sigma^4}-\frac{I}{\sigma^2}\right)(S_i-S_j).
\]

We set the interaction parameters from the twelve initial clouds. We took $\sigma$ as the median, across clouds, of the within-cloud median pair distance. With this $\sigma$, we computed the external- and unit-interaction-gradient norms in the empirical $L^2(\rho_N)$ metric and set $\lambda$ to $0.75$ times their median ratio.
We rounded once before running the solvers, obtaining
\[
  \sigma=0.09308,\qquad \lambda=0.7821,
  \qquad L_{\rm eff}=32+\lambda/\sigma^2=122.2713.
\]
The initial interaction-to-external gradient ratios range from $0.7475$ to
$0.7526$ (median $0.7501$), so the interaction contributes in every
tested initial state.

The Wasserstein trust-region method uses the parameters
$(\Delta_0,\Delta_{\max})=(0.5,2)$, $(\eta_1,\eta_2)=(0.1,0.75)$,
$(\gamma_1,\gamma_2)=(0.25,2)$, $\theta=0.8$, 50 truncated-CG iterations,
and relative CG tolerance $10^{-9}$. Fixed-step WGD uses the full-objective
curvature step $0.8/L_{\rm eff}=0.00654283$. Armijo WGD uses the
Armijo sufficient-decrease rule \cite{Armijo1966}; it starts with
$1/L_{\rm eff}=0.00817854$, uses $c=10^{-4}$ and contraction $1/2$, and
reuses $\min\{1,2\alpha_{\rm accepted}\}$ as the next trial step. All methods
use the same $L^2(\rho_N)$ gradient norm and stop at $10^{-5}$.

Before timing, we validated the implementations on concentrated, well-separated, and nearly coincident clouds against independent pair sums and explicit pair Hessians. Directional finite differences, the $1/N$
normalization, duplicated-particle invariance, operator symmetry, and cache
counts passed. The formal study used a common float64 PyTorch implementation
with CUDA kernels on one NVIDIA GeForce RTX 4090 GPU, PyTorch
2.12.0+cu126, and one numerical-library thread. A full-size warm-up preceded
measurement. For each seed, we evaluated the three methods serially in a fixed random order and synchronized CUDA at every timing boundary. The timer includes
method-specific setup, kernel precomputation, objective and gradient
evaluations, every model-operator and CG action, all line-search trials,
accept/reject and radius or step updates, and stopping checks. It excludes
common particle generation, serialization, and plotting.

We report separate objective, gradient, and model-operator counts for all 36 runs as workload diagnostics; these counts do not enter the time-to-tolerance summaries. The complete record is in the supplement.

The formal summaries use a common 601-point time grid. At each grid time, we use the most recent completed accepted state at or before that time; rejected trials and unsuccessful Armijo probes consume time but do not produce a state.
Once a run reaches the tolerance, its two traces stop at the first accepted
state at or below the tolerance. In panel (a), the colored terminal markers show
the objective gap $\Delta\mathcal F=0$ at the nearest-rank median stopping time;
panel (b) retains the terminal gradient markers. Curves and bands show the median
and first/third quartiles over the initializations still active at each time; we
connect the active-run medians on the dense display grid for display.

\begin{figure}[H]
  \centering
  \includegraphics[width=0.98\textwidth]{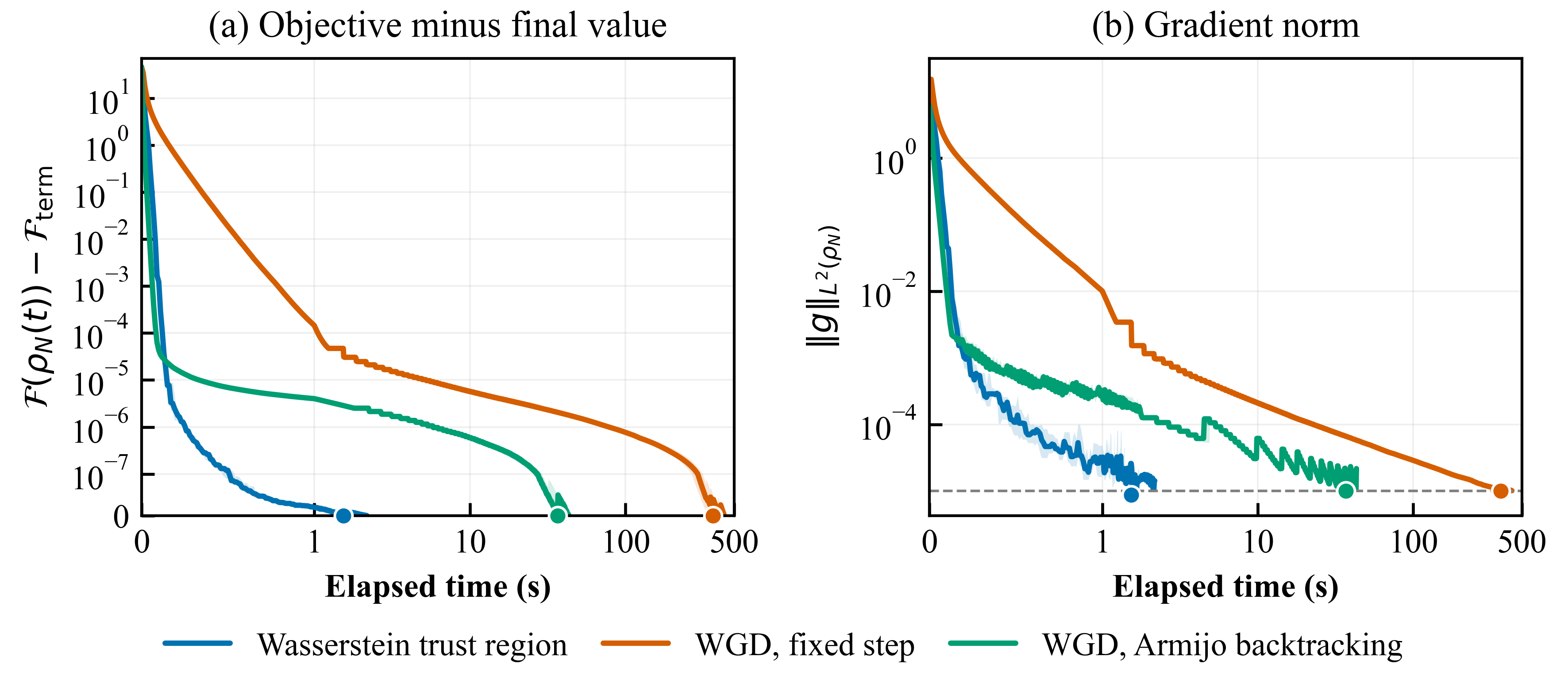}
  \caption{Full Gaussian-interaction comparison at $N=1000$, $d=20$. Curves show
    the median over twelve fixed initializations and shaded regions show the first
    and third quartiles. Panel (a) plots the objective gap
    $\Delta\mathcal F(t)=\mathcal F(\rho_N(t))-\mathcal F_{\rm term}$ on a
    symmetric-logarithmic scale with a $10^{-7}$ linear neighborhood; the colored
    terminal markers show $\Delta\mathcal F=0$ at the nearest-rank median
    stopping time. Panel (b) uses a logarithmic gradient-norm scale. Each value
    shown is the latest completed accepted state, and each trace stops at its
    tolerance crossing after reaching $10^{-5}$.}
  \label{fig:gaussian-runtime}
\end{figure}

\begin{table}[H]
  \centering
  \scriptsize
  \begin{tabular}{lccc}
\hline
Method & Time to $10^{-5}$ (s) & Terminal energy & Terminal gradient \\
\hline
Wasserstein trust region & $1.54\,[1.33,1.70]$ & $0.004775\,[0.004694,0.004922]$ & $8.68\!\times\!10^{-6}\,[8.23\!\times\!10^{-6},9.29\!\times\!10^{-6}]$ \\
WGD, fixed step & $365.02\,[339.05,391.71]$ & $0.025589\,[0.025424,0.025934]$ & $1.00\!\times\!10^{-5}\,[1.00\!\times\!10^{-5},1.00\!\times\!10^{-5}]$ \\
WGD, Armijo backtracking & $36.78\,[35.20,39.31]$ & $0.023064\,[0.022816,0.023358]$ & $9.98\!\times\!10^{-6}\,[9.97\!\times\!10^{-6},9.99\!\times\!10^{-6}]$ \\
\hline
\end{tabular}

  \caption{Full-objective runtime comparison. Entries report nearest-rank medians
    and quartiles across the twelve runs. Per-initialization stopping reasons and the
    separate objective, gradient, and model-action counts are in the supplement.}
  \label{tab:gaussian-runtime}
\end{table}

\section{Conclusions and future directions}
\label{sec:discussion}

We have developed a trust-region method for smooth, possibly
nonconvex potential--interaction energies in Wasserstein space.
The method uses quadratic models along pushforward curves and
applies to every probability measure with finite second moment,
including empirical measures.
Under the stated assumptions, we establish objective
monotonicity, convergence of the Wasserstein-gradient norms
to zero, and an $O(\varepsilon^{-2})$ bound on the total number
of outer trials, including rejected trials.
With an additional quadratic-coercivity assumption on the
potential, we also prove stationarity of every weak
accumulation point.
For empirical measures, the outer-trial complexity constants
are independent of the particle number when the initial
objective gaps are uniformly bounded and the algorithmic
parameters are fixed.

We have evaluated the method on a reduced-unit Gaussian-core
soft-particle model for dendrimer centers of mass with isotropic
harmonic confinement and MMD minimization problems with
non-Gaussian targets.
The experiments examine the contributions of the quadratic
model and radius adaptation, compare runtime with fixed-step and Armijo Wasserstein gradient descent, and examine how the number of outer trials grows with particle number and dimension.

Several questions remain open.
On the theoretical side, one is to relate local convergence
rates to the curvature of the energy near stationary measures.
Another is to bound the distance between finite-particle
iterates and their measure-level counterparts, which would guide
the choice of particle number for a given accuracy.
On the computational side, trust-region control could be
combined with inexact or stochastic evaluations and fast
interaction summation; the question is how to tie evaluation
accuracy to the radius while preserving sufficient decrease.
For distribution approximation, the joint effect of kernel
choice, target-sample size, and optimization accuracy remains
to be understood.

\section*{Acknowledgments}
This work was supported by the National Natural Science Foundation of China (12401233), NSFC International Creative Research Team (W2541005), National Key R$\&$D Program of China (2021ZD0201300), the Guangdong Provincial Key Laboratory of Mathematical and Neural Dynamical Systems (2024B1212010004), the Natural Science Foundation of Guangdong (Grant No. 2025A1515011188), the Guangdong-Dongguan Joint Research Fund (2023A1515140016), the Cross Disciplinary Research Team on Data Science and Intelligent Medicine (2023KCXTD054), the Hubei Provincial Natural Science Foundation of China (2026AFA034), Dongguan Key Laboratory for Data Science and Intelligent
Medicine and Hubei Key Laboratory of Engineering Modeling and Scientific Computing.

\section*{Data and code availability}

The code and numerical data used to generate the figures and tables are provided in the supplementary material, together with dependency information and instructions for reproducing the experiments.

\begin{appendices}

  \section[Continuity of interaction fields under weak convergence]{Continuity of interaction fields under weak\\ convergence}
  \label{app:convolution}

  The next lemma gives the convolution continuity used for accumulation points. The uniform second-moment bound controls the linear growth of a globally Lipschitz kernel.

  \begin{lemma}
    \label{lem:convolution-continuity}
    Let $\mu_j,\mu\in\Pcal(\R^d)$ satisfy $\mu_j\rightharpoonup\mu$ weakly and
    \[
      \sup_j\int|y|^2\,d\mu_j(y)\le M_2<\infty
    \]
    Let $K:\R^d\to\R^d$ be globally Lipschitz. Then
    \[
      (K*\mu_j)(x):=\int K(x-y)\,d\mu_j(y)
      \longrightarrow
      (K*\mu)(x)
    \]
    locally uniformly in $x$.
  \end{lemma}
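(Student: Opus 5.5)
The plan has two steps. First prove pointwise convergence at each fixed $x$ by a truncation argument that handles the unbounded (linearly growing) integrand. Then upgrade to local uniform convergence using equicontinuity of the family $x\mapsto K*\mu_j(x)$, which follows from the Lipschitz bound. Let $L$ denote the Lipschitz constant of $K$. Then $|K(z)|\le |K(0)|+L|z|$, so for fixed $x$ the integrand $y\mapsto K(x-y)$ is continuous with at most linear growth. It is not bounded, so weak convergence does not apply to it directly.

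\emph{Pointwise convergence.} Fix $x$ and $R>0$, and choose a continuous cutoff $\chi_R$ with $0\le\chi_R\le1$, $\chi_R=1$ on $B_R$ and $\chi_R=0$ outside $B_{2R}$. Split
\[
  K*\mu_j(x)-K*\mu(x)=\int\chi_R K(x-\cdot)\,d(\mu_j-\mu)+\int(1-\chi_R)K(x-\cdot)\,d\mu_j-\int(1-\chi_R)K(x-\cdot)\,d\mu .
\]
The first term is the integral of a bounded continuous function, so it tends to zero by weak convergence. For the tails, on $\{|y|\ge R\}$ with $R\ge1$ we have $|K(x-y)|\le |K(0)|+L|x|+L|y|\le C_x|y|^2/R$ for a constant $C_x$ depending only on $|x|$, $L$ and $|K(0)|$. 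Hence the $\mu_j$-tail is at most $C_xM_2/R$, uniformly in $j$. For the $\mu$-tail we need a second-moment bound on $\mu$. Lower semicontinuity of $\int|y|^2\,d\mu$ under weak convergence (approximate $|y|^2$ from below by $\min(|y|^2,n)$) gives $\int|y|^2d\mu\le M_2$, so the $\mu$-tail is also at most $C_xM_2/R$. Take $\limsup_j$ first and then $R\to\infty$.

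\emph{Local uniformity.} For every $j$ and all $x,x'$,
\[
  |K*\mu_j(x)-K*\mu_j(x')|\le L|x-x'|,
\]
and the same bound holds for $K*\mu$. Fix a compact set $C$ and $\epsilon>0$. Cover $C$ by a finite $\epsilon/(3L)$-net. Pointwise convergence at the finitely many net points, together with this uniform Lipschitz estimate, gives $\sup_{x\in C}|K*\mu_j(x)-K*\mu(x)|<\epsilon$ for all large $j$. This is the standard equicontinuity (Arzel\`a--Ascoli type) argument. Alternatively, one can note that the tail bound $C_x$ is uniform over $x\in C$ and argue the truncated part directly. The net argument is cleaner.

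The main obstacle is the tail control. Weak convergence alone does not handle unbounded test functions, so both the uniform second-moment bound on the $\mu_j$ and its inheritance by the limit $\mu$ are needed. The limit's moment bound is the one point requiring a separate lower-semicontinuity step. If we work in $\Pcal$ we also know $\mu\in\Pcal$ a priori, which already makes the $\mu$-tail vanish by dominated convergence. Either route closes the argument, and everything else is routine.
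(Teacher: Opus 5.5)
Your proof is correct and uses the same ingredients as the paper's: a continuous cutoff so that weak convergence applies to the truncated integrand, tail control from the uniform second-moment bound (with lower semicontinuity passing it to $\mu$), and a finite-net argument based on $L$-Lipschitz dependence on $x$. The only difference is the order of the steps. You run the net argument on the full convolutions $K*\mu_j$, which are themselves $L$-Lipschitz, so your tail bounds need hold only pointwise in $x$. The paper instead makes the tail estimates uniform over the compact set and applies the net argument to the truncated integrals $F_{j,R}$ alone, so your arrangement is slightly more economical.
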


  \begin{proof}
    Let $L$ be a global Lipschitz constant of $K$. Then
    \begin{equation}
      \label{eq:kernel-linear-growth}
      |K(z)|\le |K(0)|+L|z|
    \end{equation}
    Moreover, lower semicontinuity of the second moment under weak convergence gives
    \[
      \int|y|^2\,d\mu(y)
      \le\liminf_{j\to\infty}\int|y|^2\,d\mu_j(y)
      \le M_2
    \]

    Fix a compact set $C\subset\R^d$ and choose $M<\infty$ such that $|x|\le M$ on $C$. For $R\ge1$, Markov's inequality and Cauchy--Schwarz give, uniformly in $j$,
    \begin{align}
      \mu_j(|y|>R) & \le\frac{M_2}{R^2}
      \label{eq:tail-mass}              \\
      \int_{|y|>R}|y|\,d\mu_j(y)
                   & \le
      \left(\int|y|^2\,d\mu_j(y)\right)^{1/2}
      \mu_j(|y|>R)^{1/2}
      \le\frac{M_2}{R}
      \label{eq:tail-first-moment}
    \end{align}
    The same bounds hold with $\mu$ in place of $\mu_j$. Combining \eqref{eq:kernel-linear-growth}--\eqref{eq:tail-first-moment}, for every $x\in C$,
    \begin{align}
      \int_{|y|>R}|K(x-y)|\,d\mu_j(y)
       & \le (|K(0)|+LM)\mu_j(|y|>R)
      +L\int_{|y|>R}|y|\,d\mu_j(y)\notag               \\
       & \le (|K(0)|+LM)\frac{M_2}{R^2}+L\frac{M_2}{R}
      \label{eq:convolution-tail}
    \end{align}
    Thus the tails vanish as $R\to\infty$, uniformly in $j$ and $x\in C$; the same holds for $\mu$.

    It remains to treat a bounded region in $y$. Choose $\zeta_R\in C_c(\R^d)$ with $0\le\zeta_R\le1$, $\zeta_R=1$ on $\{|y|\le R\}$, and $\zeta_R=0$ on $\{|y|\ge2R\}$. Define
    \[
      F_{j,R}(x):=\int\zeta_R(y)K(x-y)\,d\mu_j(y),
      \qquad
      F_R(x):=\int\zeta_R(y)K(x-y)\,d\mu(y)
    \]
    For each fixed $x$, the map $y\mapsto\zeta_R(y)K(x-y)$ is bounded and continuous. Weak convergence therefore gives
    \begin{equation}
      \label{eq:truncated-pointwise}
      F_{j,R}(x)\longrightarrow F_R(x)
    \end{equation}
    The convergence is uniform on $C$. For all $x,x'\in\R^d$,
    \[
      |F_{j,R}(x)-F_{j,R}(x')|
      \le\int\zeta_R(y)|K(x-y)-K(x'-y)|\,d\mu_j(y)
      \le L|x-x'|
    \]
    and the same estimate holds for $F_R$. Given $r>0$, compactness supplies a finite $r$-net $x_1,\ldots,x_N$ of $C$. For every $x\in C$, choose $x_\ell$ with $|x-x_\ell|<r$. Then
    \[
      |F_{j,R}(x)-F_R(x)|
      \le 2Lr+\max_{1\le\ell\le N}|F_{j,R}(x_\ell)-F_R(x_\ell)|
    \]
    For fixed $r$ and $R$, the maximum tends to zero by \eqref{eq:truncated-pointwise}, since the net is finite. Letting first $j\to\infty$ and then $r\downarrow0$ proves
    \begin{equation}
      \label{eq:truncated-uniform}
      \sup_{x\in C}|F_{j,R}(x)-F_R(x)|\longrightarrow0
      \qquad(R\text{ fixed})
    \end{equation}

    Finally, decompose each convolution into its $\zeta_R$-truncated part and its remainder. Equations \eqref{eq:convolution-tail} and \eqref{eq:truncated-uniform} yield
    \begin{align*}
      \sup_{x\in C}|(K*\mu_j)(x)-(K*\mu)(x)|
      \le{} & \sup_{x\in C}|F_{j,R}(x)-F_R(x)|                    \\
            & +\sup_{x\in C}\int(1-\zeta_R(y))|K(x-y)|\,d\mu_j(y) \\
            & +\sup_{x\in C}\int(1-\zeta_R(y))|K(x-y)|\,d\mu(y)
    \end{align*}
    Given $\varepsilon>0$, first choose $R$ so that the last two terms are smaller than $\varepsilon/2$ uniformly in $j$, and then choose $j$ so that the first term is smaller than $\varepsilon/2$. Hence
    \[
      \sup_{x\in C}|(K*\mu_j)(x)-(K*\mu)(x)|\longrightarrow0
    \]
    Since $C$ was arbitrary, the convergence is locally uniform. Apply the lemma with $K=\nabla W$.
  \end{proof}

  \section{The Steihaug--Toint truncated-CG subsolver}
  \label{app:steihaug}

  This appendix specifies the unpreconditioned subsolver used in Algorithm~\ref{alg:wtr} \cite{Steihaug1983,ConnGouldToint2000}. Fix one iterate $\rho$ and abbreviate
  \[
    \mathsf H:=L^2(\rho;\mathbb R^d),
    \qquad
    \langle u,v\rangle:=\iprho{u}{v}{\rho},
    \qquad
    \|u\|:=\normrho{u}{\rho}
  \]
  After dropping the constant $\Fcal(\rho)$, the trust-region subproblem is
  \begin{equation}
    \label{eq:st-subproblem}
    \min_{\|s\|\le\Delta}q(s),
    \qquad
    q(s):=\langle g,s\rangle+\frac12\langle s,Bs\rangle
  \end{equation}
  where $g\in\mathsf H$ and $B:\mathsf H\to\mathsf H$ is bounded and self-adjoint but need not be positive semidefinite. We apply $B$ only through products $Bp$.
  For Corollary~\ref{cor:tangent-tr}, use $\mathsf H=T_\rho\Pcal$ with its inherited inner product and $B=\mathsf H_\rho$; the recurrences and proof apply in either Hilbert space.

  For $g\ne0$, let $G:=\|g\|$ and $b_g:=\langle g,Bg\rangle$. The Cauchy point, namely the minimizer of $q$ on the feasible negative-gradient ray, is $s_{\rm C}=-t_{\rm C}g$ with
  \begin{equation}
    \label{eq:st-cauchy-point}
    t_{\rm C}:=
    \begin{cases}
      \min\{G^2/b_g,\Delta/G\}, & b_g>0   \\
      \Delta/G,                 & b_g\le0
    \end{cases}
  \end{equation}
  For any $\|z\|<\Delta$ and nonzero direction $p$, the positive boundary root of $z+t p$ is
  \begin{equation}
    \label{eq:st-boundary-root}
    \tau_\Delta(z,p)
    :=
    \frac{-\langle z,p\rangle+
      \sqrt{\langle z,p\rangle^2+\|p\|^2(\Delta^2-\|z\|^2)}}{\|p\|^2}
  \end{equation}

  \begin{algorithm}[H]
    \caption{Steihaug--Toint truncated CG in a Hilbert space}
    \label{alg:steihaug}
    \begin{algorithmic}[1]
      \Require $g$, self-adjoint model-operator oracle $p\mapsto Bp$, radius $\Delta>0$, relative tolerance $\varepsilon_{\rm cg}\in[0,1)$, inner-iteration limit $J_{\max}\in\{1,2,\ldots\}$
      \If{$\|g\|=0$}
      \State \Return $0$
      \EndIf
      \State $z_0\gets0$, $r_0\gets g$, $p_0\gets-r_0$
      \For{$j=0,\ldots,J_{\max}-1$}
      \State $w_j\gets Bp_j$, $\kappa_j\gets\langle p_j,w_j\rangle$
      \If{$\kappa_j\le0$}
      \State \Return $z_j+\tau_\Delta(z_j,p_j)p_j$ \Comment{nonpositive curvature}
      \EndIf
      \State $\alpha_j\gets\|r_j\|^2/\kappa_j$
      \If{$\|z_j+\alpha_jp_j\|\ge\Delta$}
      \State \Return $z_j+\tau_\Delta(z_j,p_j)p_j$ \Comment{boundary truncation}
      \EndIf
      \State $z_{j+1}\gets z_j+\alpha_jp_j$
      \State $r_{j+1}\gets r_j+\alpha_jw_j$
      \If{$\|r_{j+1}\|\le\varepsilon_{\rm cg}\|g\|$}
      \State \Return $z_{j+1}$ \Comment{interior residual stop}
      \EndIf
      \State $\beta_{j+1}\gets\|r_{j+1}\|^2/\|r_j\|^2$
      \State $p_{j+1}\gets-r_{j+1}+\beta_{j+1}p_j$
      \EndFor
      \State \Return $z_{J_{\max}}$ \Comment{inner-iteration limit}
    \end{algorithmic}
  \end{algorithm}

  \begin{proposition}[Feasibility and Cauchy decrease]
    \label{prop:steihaug-cauchy}
    Under exact arithmetic, let $B$ be bounded and self-adjoint, with $J_{\max}\ge1$. For $g\ne0$, Algorithm~\ref{alg:steihaug} returns a Steihaug--Toint step $s_{\rm ST}$ that satisfies $\|s_{\rm ST}\|\le\Delta$ and
    \begin{equation}
      \label{eq:st-cauchy-decrease}
      q(0)-q(s_{\rm ST})
      \ge
      \frac12\|g\|\min\left\{\frac{\|g\|}{\|B\|},\Delta\right\}>0
    \end{equation}
    where $\|g\|/\|B\|:=+\infty$ when $B=0$.
  \end{proposition}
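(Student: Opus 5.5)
The plan follows the classical Steihaug argument. First I would record the CG invariants that hold as long as the loop has not returned. The residual satisfies $r_j=g+Bz_j=\nabla q(z_j)$. The residuals are mutually orthogonal and the directions are $B$-conjugate. Moreover, $q$ decreases strictly along the iterates, $q(z_{j+1})<q(z_j)$, since each $\alpha_j$ is the exact minimizer of $q$ along $p_j$ when $\kappa_j>0$. The norms increase, $\|z_{j+1}\|>\|z_j\|$, which follows from $\langle z_j,p_j\rangle\ge0$. All of these are proved by the usual induction on $j$. Only bilinearity, self-adjointness and the inner product are used, so they transfer verbatim to $L^2(\rho;\R^d)$ or to $T_\rho\Pcal$.

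Feasibility then follows case by case. Interior iterates $z_{j+1}$ are accepted only when $\|z_j+\alpha_jp_j\|<\Delta$. Boundary returns satisfy $\|z_j+\tau p_j\|=\Delta$ by the definition of $\tau_\Delta$, and $\tau_\Delta$ is well defined because $\|z_j\|<\Delta$ at every stage. The final return $z_{J_{\max}}$ is one of the accepted interior iterates.

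For the decrease, I would first show that the returned step does at least as well as the first inner iteration. The first iteration is exactly the Cauchy point computation, since $z_0=0$ and $p_0=-g$. If $\kappa_0\le0$ or the first step truncates, the return is $-\tau g$ with $\tau=\Delta/G$, which is exactly $s_{\rm C}$. Otherwise $z_1=-(G^2/b_g)g=s_{\rm C}$. In every later case the returned point has $q$ no larger than $q(z_1)$. For interior returns this is the monotonicity above. For boundary returns at index $j\ge1$, the map $t\mapsto q(z_j+tp_j)$ is nonincreasing on $[0,\tau]$. This holds because $\langle r_j,p_j\rangle=-\|r_j\|^2<0$, and either the quadratic is concave ($\kappa_j\le0$) or its minimizer $\alpha_j\ge\tau$. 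Hence $q(s_{\rm ST})\le q(s_{\rm C})$.

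Finally I would bound $q(0)-q(s_{\rm C})$ in the standard way.
\begin{itemize}
\item If $b_g\le0$, the decrease is at least $\Delta G$.
\item If $b_g>0$ and $G^2/b_g\le\Delta/G$, the decrease is $G^4/(2b_g)\ge G^2/(2\|B\|)$, using $b_g\le\|B\|G^2$.
\item Otherwise $t=\Delta/G$ with $t\,b_g<G^2$, so the decrease is $tG^2-\tfrac12t^2b_g\ge\tfrac12tG^2=\tfrac12\Delta G$.
\end{itemize}
In all three cases the bound is $\tfrac12G\min(G/\|B\|,\Delta)$, and it is positive since $g\ne0$ and $\Delta>0$. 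The convention $G/0=\infty$ covers $B=0$, which falls in the case $b_g=0\le0$.

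I expect the main obstacle to be the inductive invariant $\langle z_j,p_j\rangle\ge0$ together with the monotone decrease along the truncated boundary segment. This is the one place where the CG structure, namely conjugacy and the formula for $\beta$, is genuinely needed. I would prove it by expanding $p_j$ as a nonnegative combination of $-r_i$ ($i\le j$) and $z_j$ as a combination of $p_i$ with positive $\alpha_i$, then using residual orthogonality.
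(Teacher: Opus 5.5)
Your proof is correct and takes essentially the same route as the paper. In both, the first CG step or its truncation is exactly $s_{\rm C}$, and the ray identity $q(z_j+tp_j)-q(z_j)=-t\|r_j\|^2+\tfrac12t^2\kappa_j$ shows that later updates and exits never increase $q$. The Cauchy point is then bounded below; the paper does this with the single test step $\widehat t=\min\{\|B\|^{-1},\Delta/G\}$ rather than your three cases.

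The invariant you flag as the main obstacle is not needed. This is $\langle z_j,p_j\rangle\ge0$, and with it global conjugacy and residual orthogonality. Feasibility is enforced by the algorithm's explicit boundary test. At a truncation, $\tau\le\alpha_j$ holds because $t\mapsto\|z_j+tp_j\|^2-\Delta^2$ is a convex quadratic that is negative at $t=0$ and nonnegative at $t=\alpha_j$. Finally, $\langle r_j,p_j\rangle=-\|r_j\|^2$ follows by induction from the one-step identity $\langle r_{j+1},p_j\rangle=0$.
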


  \begin{proof}
    Feasibility is immediate for an interior return, while \eqref{eq:st-boundary-root} gives norm exactly $\Delta$ at either truncated return. To bound the decrease, first consider the Cauchy point. If $M:=\|B\|>0$, set $\widehat t:=\min\{M^{-1},\Delta/G\}$; if $M=0$, set $\widehat t:=\Delta/G$. Since $s_{\rm C}$ minimizes the ray model and $\widehat tG\le\Delta$,
    \begin{align}
      q(0)-q(s_{\rm C})
       & \ge q(0)-q(-\widehat t g)\notag                             \\
       & =\widehat tG^2-\frac12\widehat t^2\langle g,Bg\rangle\notag \\
       & \ge\widehat tG^2-\frac12M\widehat t^2G^2
      \ge\frac12G\min\{G/M,\Delta\}
      \label{eq:st-cauchy-ray-bound}
    \end{align}
    with the same zero-operator convention.

    It remains to compare the truncated-CG iterate with $s_{\rm C}$. At $j=0$, $p_0=-g$. If $\kappa_0\le0$, the negative-curvature exit is exactly $s_{\rm C}$; if $\kappa_0>0$, the first full CG step is the unconstrained minimizer on the negative-gradient ray, and either it lies inside the ball or the boundary exit is again $s_{\rm C}$. Hence the first completed or truncated step has model value at most $q(s_{\rm C})$.

    Before any later exit, induction gives $r_j=g+Bz_j$ and $\langle r_j,p_j\rangle=-\|r_j\|^2$. The identity holds at $j=0$; the recurrences preserve it because $\langle r_{j+1},p_j\rangle=0$ and $p_{j+1}=-r_{j+1}+\beta_{j+1}p_j$. Consequently, along the current search ray,
    \begin{equation}
      \label{eq:st-ray-model}
      q(z_j+t p_j)-q(z_j)
      =-t\|r_j\|^2+\frac12t^2\kappa_j
    \end{equation}
    If $\kappa_j>0$, this expression decreases on $0<t\le\alpha_j$; a boundary truncation uses a smaller positive step. If $\kappa_j\le0$, it decreases for every $t>0$. Thus subsequent CG updates and exits do not increase the model, so $q(s_{\rm ST})\le q(s_{\rm C})$. Combining this with \eqref{eq:st-cauchy-ray-bound} proves \eqref{eq:st-cauchy-decrease}.
  \end{proof}

  \begin{remark}[Particle implementation]
    \label{rem:st-implementation}
    For an empirical measure $\rho_N=N^{-1}\sum_{i=1}^N\delta_{X_i}$, use the particle coordinates and inner product of \S\ref{sec:numerics}; coincident labels share a support point, weight, and field value. The Gaussian-core runs use $J_{\max}=30$, $\varepsilon_{\rm cg}=10^{-6}$, and gradient tolerance $2\times10^{-4}$. The maximum-mean-discrepancy runs use $J_{\max}=24$, $\varepsilon_{\rm cg}=2\times10^{-6}$, and the same gradient tolerance. The scaling runs use $J_{\max}=50$, $\varepsilon_{\rm cg}=10^{-9}$, and gradient tolerance $10^{-5}$, and the GPU experiment with the full Gaussian-interaction objective uses the same settings. Because $B=0$, the Cauchy-only ablation uses the linear-model boundary step $s=-\Delta g/\|g\|$ and bypasses conjugate gradients. This is the linear-model Cauchy point, whereas \eqref{eq:st-cauchy-point} defines the Cauchy point for the retained quadratic model. In finite precision, one may also evaluate \eqref{eq:st-cauchy-point} and retain the step with the smaller model value.
  \end{remark}

\end{appendices}


{\footnotesize
  \begin{thebibliography}{99}

  \bibitem{LouisBolhuisHansen2000}
  A.~A.~Louis, P.~G.~Bolhuis, J.~P.~Hansen,
  \emph{Mean-field fluid behavior of the Gaussian core model},
  Phys.\ Rev.\ E \textbf{62} (2000), 7961--7972,
  doi:10.1103/PhysRevE.62.7961.

  \bibitem{LouisBolhuisHansenMeijer2000}
  A.~A.~Louis, P.~G.~Bolhuis, J.~P.~Hansen, E.~J.~Meijer,
  \emph{Can polymer coils be modeled as ``soft colloids''?},
  Phys.\ Rev.\ Lett.\ \textbf{85} (2000), 2522--2525,
  doi:10.1103/PhysRevLett.85.2522.

  \bibitem{Likos2002} C.~N.~Likos, S.~Rosenfeldt, N.~Dingenouts, M.~Ballauff, P.~Lindner, N.~Werner, F.~V\"ogtle, \emph{Gaussian effective interaction between flexible dendrimers of fourth generation: a theoretical and experimental study}, J.\ Chem.\ Phys.\ \textbf{117} (2002), 1869--1877, doi:10.1063/1.1486209.

  \bibitem{NeumanBlock2004}
  K.~C.~Neuman, S.~M.~Block,
  \emph{Optical trapping},
  Rev.\ Sci.\ Instrum.\ \textbf{75} (2004), 2787--2809,
  doi:10.1063/1.1785844.

  \bibitem{GernertKlapp2015}
  R.~Gernert, S.~H.~L.~Klapp,
  \emph{Enhancement of mobility in an interacting colloidal system
  under feedback control},
  Phys.\ Rev.\ E \textbf{92} (2015), 022132,
  doi:10.1103/PhysRevE.92.022132.

  \bibitem{Arbel2019} M.~Arbel, A.~Korba, A.~Salim, A.~Gretton, \emph{Maximum mean discrepancy gradient flow}, Adv.\ Neural Inf.\ Process.\ Syst.\ \textbf{32} (2019), arXiv:1906.04370.

  \bibitem{JKO1998} R.~Jordan, D.~Kinderlehrer, F.~Otto, \emph{The variational formulation of the Fokker--Planck equation}, SIAM J.\ Math.\ Anal.\ \textbf{29} (1998), 1--17.

  \bibitem{SalimKorbaLuise2020} A.~Salim, A.~Korba, G.~Luise, \emph{The Wasserstein proximal gradient algorithm}, NeurIPS 2020.

  \bibitem{DiMarino2025} S.~Di~Marino, E.~Naldi, S.~Villa, \emph{Inexact JKO and proximal-gradient algorithms in the Wasserstein space}, 2025, arXiv:2505.23517.

  \bibitem{Bonet2024} C.~Bonet, T.~Uscidda, A.~David, P.-C.~Aubin-Frankowski, A.~Korba, \emph{Mirror and Preconditioned Gradient Descent in Wasserstein Space}, Adv.\ Neural Inf.\ Process.\ Syst.\ \textbf{37} (2024), arXiv:2406.08938, doi:10.52202/079017-0798.

  \bibitem{Detommaso2018} G.~Detommaso, T.~Cui, A.~Spantini, Y.~Marzouk, R.~Scheichl, \emph{A Stein variational Newton method}, NeurIPS 2018, arXiv:1806.03085.

  \bibitem{WangLi2020} Y.~Wang, W.~Li, \emph{Information Newton's flow: second-order optimization method in probability space}, 2020, arXiv:2001.04341.

  \bibitem{Yamamoto2025} N.~Yamamoto, J.~Kim, T.~Suzuki, \emph{Hessian-guided perturbed Wasserstein gradient flows for escaping saddle points}, Adv.\ Neural Inf.\ Process.\ Syst.\ \textbf{38} (2025), arXiv:2509.16974.

  \bibitem{LascuSuzuki2026} R.-A.~Lascu, T.~Suzuki, \emph{From saddle points toward global minima: a Newton-type method on Wasserstein space}, 2026, arXiv:2605.17963.

  \bibitem{Lambert2022} M.~Lambert, S.~Chewi, F.~Bach, S.~Bonnabel, P.~Rigollet, \emph{Variational inference via Wasserstein gradient flows}, Adv.\ Neural Inf.\ Process.\ Syst.\ \textbf{35} (2022), 14434--14447.

  \bibitem{Diao2023} M.~Z.~Diao, K.~Balasubramanian, S.~Chewi, A.~Salim, \emph{Forward-backward Gaussian variational inference via JKO in the Bures--Wasserstein space}, Proc.\ Mach.\ Learn.\ Res.\ \textbf{202} (2023), 7960--7991.

  \bibitem{Terpin2022} A.~Terpin, N.~Lanzetti, B.~Yardim, F.~D\"orfler, G.~Ramponi, \emph{Trust region policy optimization with optimal transport discrepancies: duality and algorithm for continuous actions}, Adv.\ Neural Inf.\ Process.\ Syst.\ \textbf{35} (2022), 19786--19797.

  \bibitem{Toint1988} P.~L.~Toint, \emph{Global convergence of a class of trust-region methods for nonconvex minimization in Hilbert space}, IMA J.\ Numer.\ Anal.\ \textbf{8}(2) (1988), 231--252, doi:10.1093/imanum/8.2.231.

  \bibitem{NocedalWright2006} J.~Nocedal, S.~J.~Wright, \emph{Numerical Optimization}, 2nd ed., Springer, 2006.

  \bibitem{ConnGouldToint2000} A.~R.~Conn, N.~I.~M.~Gould, P.~L.~Toint, \emph{Trust-Region Methods}, SIAM, 2000.

  \bibitem{Absil2007} P.-A.~Absil, C.~G.~Baker, K.~A.~Gallivan, \emph{Trust-region methods on Riemannian manifolds}, Found.\ Comput.\ Math.\ \textbf{7} (2007), 303--330.

  \bibitem{Manns2026} P.~Manns, \emph{Convergence of trust-region algorithms in metric spaces}, SIAM J.\ Optim.\ \textbf{36}(2) (2026), 841--865, doi:10.1137/24M1718627.

  \bibitem{AGS2008} L.~Ambrosio, N.~Gigli, G.~Savar\'e, \emph{Gradient Flows in Metric Spaces and in the Space of Probability Measures}, 2nd ed., Birkh\"auser, 2008.

  \bibitem{Lanzetti2024} N.~Lanzetti, A.~Terpin, F.~D\"orfler, \emph{Variational analysis in the Wasserstein space}, arXiv:2406.10676v2, 2026.

  \bibitem{Otto2001} F.~Otto, \emph{The geometry of dissipative evolution equations: the porous medium equation}, Comm.\ Partial Differential Equations \textbf{26} (2001), 101--174, doi:10.1081/PDE-100002243.

  \bibitem{Gigli2012} N.~Gigli, \emph{Second order analysis on $(\mathcal P_2(M),W_2)$}, Mem.\ Amer.\ Math.\ Soc.\ \textbf{216} (2012), no.~1018.

  \bibitem{Rosenfeldt2009}
  S.~Rosenfeldt, M.~Ballauff, P.~Lindner, L.~Harnau,
  \emph{Structure and interaction of flexible dendrimers in
  concentrated solution},
  J.\ Chem.\ Phys.\ \textbf{130} (2009), 244901,
  doi:10.1063/1.3155208.

  \bibitem{Armijo1966} L.~Armijo, \emph{Minimization of functions having Lipschitz continuous first partial derivatives}, Pacific J.\ Math.\ \textbf{16} (1966), 1--3.

  \bibitem{Steihaug1983} T.~Steihaug, \emph{The conjugate gradient method and trust regions in large scale optimization}, SIAM J.\ Numer.\ Anal.\ \textbf{20}(3) (1983), 626--637, doi:10.1137/0720042.

  \end{thebibliography}
}
\end{document}